\DeclareMathOperator*{\argmax}{arg\,max}
\DeclareMathOperator*{\argmin}{arg\,min}
\newcommand{\indep}{\perp \!\!\! \perp}
\DeclareMathAlphabet\mathbfcal{OMS}{cmsy}{b}{n}
\DeclareRobustCommand\onedot{\futurelet\@let@token\@onedot}
\def\@onedot{\ifx\@let@token.\else.\null\fi\xspace}
\def\eg{\emph{e.g}\onedot} 
\def\ie{\emph{i.e}\onedot}
\newcommand{\printfnsymbol}[1]{%
  \textsuperscript{\@fnsymbol{#1}}%
}
\newcommand{\cmark}{\ding{51}}  
\newcommand{\xmark}{\ding{55}}  
\newcommand{\dmark}{\textbf{!}}  
\theoremstyle{plain}
\newtheorem{theorem}{Theorem}[section]
\newtheorem{proposition}[theorem]{Proposition}
\newtheorem{lemma}[theorem]{Lemma}
\theoremstyle{definition}
\newtheorem{definition}[theorem]{Definition}
\theoremstyle{remark}
\title{Joint Learning of Label and Environment Causal Independence for Graph Out-of-Distribution Generalization}
\author{%
  Shurui Gui,\quad Meng Liu,\quad Xiner Li,\quad Youzhi Luo,\quad Shuiwang Ji\\
  Department of Computer Science 
  \& Engineering\\
  Texas A\&M University\\
  College Station, TX 77843 \\
  \texttt{\{shurui.gui,mengliu,lxe,yzluo,sji\}@tamu.edu} \\
}
\begin{document}


\maketitle

\abovedisplayskip=0pt
\abovedisplayshortskip=0pt
\belowdisplayskip=0pt
\belowdisplayshortskip=0pt


\begin{abstract}
  We tackle the problem of graph out-of-distribution (OOD) generalization. Existing graph OOD algorithms either rely on restricted assumptions or fail to exploit environment information in training data. In this work, we propose to simultaneously incorporate label and environment causal independence (LECI) to fully make use of label and environment information, thereby addressing the challenges faced by prior methods on identifying causal and invariant subgraphs. We further develop an adversarial training strategy to jointly optimize these two properties for causal subgraph discovery with theoretical guarantees. Extensive experiments and analysis show that LECI significantly outperforms prior methods on both synthetic and real-world datasets, establishing LECI as a practical and effective solution for graph OOD generalization. 
  Our code is available at \url{https://github.com/divelab/LECI}.
\end{abstract}

\section{Introduction}

Graph learning methods have been increasingly used for many diverse tasks, such as drug discovery~\cite{sun2020graph}, social network analysis~\cite{myers2014information}, and physics simulation~\cite{sanchez2020learning}. One of the major challenges in graph learning is out-of-distribution (OOD) generalization, where the training and test data are typically from different distributions, resulting in significant performance degradation when deploying models to new environments. Despite several recent efforts~\cite{wu2022dir, miao2022interpretable, chen2022ciga, fan2022debiasing} have been made to tackle the OOD problem on graph tasks, their performances are often not satisfactory due to the complicated distribution shifts on graph topology and the violations of their assumptions or premises in reality. 
In addition, while environment-based methods have shown success in traditional OOD fields~\cite{ganin2016domain, arjovsky2019invariant}, they cannot perform favorably on graphs~\cite{gui2022good}. 
Several existing graph environment-centered methods~\cite{li2022learning, yang2022learning, wu2022handling} aim to \textit{infer environment labels} for graph tasks, however, most methods \textit{utilize the pre-collected environment labels} in a manner similar to traditional environment-based techniques~\cite{arjovsky2019invariant, krueger2021out, koyama2020invariance}, rather than exploiting them in a unique and specific way tailored for graphs. Therefore, the potential exploitation of environment information in graph OOD tasks remains limited (Sec.~\ref{sec:environment-based-OOD-algorithms}). Although annotating or extracting environment labels requires additional cost, it has been proved that generalization without extra information is theoretically impossible ~\cite{lin2022zin} (Appx.~\ref{app:environment-significance}). This is analogous to the impossibility of inferring or approximating causal effects without counterfactual/intervened distributions~\cite{pearl2009causality, peters2017elements}.

In this paper, we propose a novel learning strategy to incorporate label and environment causal independence (LECI) for tackling the graph OOD problem. Enforcing such independence properties can help exploit environment information and alleviate the challenging issue of graph topology shifts. Specifically, our contributions are summarized below. (1) We identify the current causal subgraph discovery challenges and propose to solve them by introducing label and environment causal independence. We further present a practical solution, an adversarial learning strategy, to jointly optimize such two causal independence properties with causal subgraph discovery guarantees. (2) LECI is positioned as the first graph-specific pre-collected environment exploitation learning strategy. This learning strategy is applicable to any subgraph discovery networks and environment inference methods. (3) According to our extensive experiments, LECI outperforms baselines on both the structure/feature shift sanity checks and real-world scenario comparisons. With additional visualization, hyperparameter sensitivity, training dynamics, and ablation studies, LECI is empirically proven to be a practically effective method. (4) Contrary to prevalent beliefs and results, we showcase that pre-collected environment information, far from being useless, can be a potent tool in graph tasks, which is evidenced more powerful than previous assumptions.

\section{Background}

\subsection{Graph OOD generalization}

We represent an attributed graph as $G=(X, A)\in \mathcal{G}$, where $\mathcal{G}$ is the graph space. $X\in \mathbb{R}^{n\times d}$ and $A\in\mathbb{R}^{n\times n}$ denote its node feature matrix and adjacency matrix respectively, where $n$ is the number of nodes and $d$ is the feature dimension. In the graph-level OOD generalization setting, each graph $G$ has an associated label $Y\in \mathcal{Y}$, where $\mathcal{Y}$ denotes the label space. Notably, training and test data are typically drawn from different distributions, \ie, $P^{tr}(G,Y)\neq P^{te}(G,Y)$. Unlike the image space, where distribution shifts occur only on feature vectors, distribution shifts in the graph space can happen on more complicated variables such as graph topology~\cite{li2023graph, chen2023does}. Therefore, many existing graph OOD methods~\cite{wu2022dir, miao2022interpretable, chen2022ciga, li2022learning, fan2022debiasing} aim to identify the most important topological information, called causal subgraphs, so that they can be used to make predictions that are robust to distribution shifts.

\subsection{Environment-based OOD algorithms}\label{sec:environment-based-OOD-algorithms}

To address distribution shifts, following invariant causal predictor (ICP)~\cite{peters2016causal} and invariant risk minimization (IRM)~\cite{arjovsky2019invariant}, many environment-based invariant learning algorithms~\cite{krueger2021out, sagawa2019distributionally, lu2021invariant, rosenfeld2020risks}, also referred as parts of domain generalization (DG)~\cite{wang2022generalizing, gulrajani2020search, koh2021wilds}, classify data into several groups called environments, denoted as $E \in \mathcal{E}=\{e_1, e_2, \ldots, e_{|\mathcal{E}|}\}$. Intuitively, data in the same environment group share similar uncritical information, such as the background of images and the size of graphs. It is assumed that the shift between the training and test distribution should be reflected among environments. Therefore, to address generalization problems in the test environments, these methods incorporate the environment shift information from training environments into neural networks, thereby driving the networks to be invariant to the shift between the training and test distribution. 

Several works further consider the inaccessibility of environment information in OOD settings. For example, environment inference methods~\cite{creager2021environment, li2022learning, yang2022learning} propose to infer environment labels to make invariant predictions. While in graph tasks, several methods~\cite{wu2022dir, miao2022interpretable, chen2022ciga, wu2022handling} attempt to generalize without the use of environments. However, these algorithms usually rely on relatively strict assumptions that can be compromised in many real-world scenarios and may be more difficult to satisfy than the access of environment information, which will be further compared and justified in Appx~\ref{app:assumption-comparisons}. As demonstrated in the Graph Out-of-Distribution (GOOD) benchmark~\cite{gui2022good} and DrugOOD~\cite{ji2022drugood}, the environment information for graph datasets is commonly accessible. Thus, in this paper, we assume the availability of environment labels and focus on invariant predictions by exploiting the given environment information in graph-specific OOD settings. Extensive discussions of related works and comparisons to previous graph OOD methods are available in Appx~\ref{app:related-works}.

\textbf{Comparisons to other environment-based graph OOD methods.} It's crucial to distinguish between the two stages of environmental-based methods: environmental inference and environmental exploitation. The first phase, environmental inference, involves predicting environmental labels, while the second, environmental exploitation, focuses on using pre-acquired environmental labels. Typically, an environmental inference method employs an environmental exploitation method to evaluate its effectiveness. However, an environmental exploitation method does not require an environmental inference method. Recent developments in graph-level environmental inference methods~\cite{yang2022learning, li2022learning} and node-level environmental inference methods~\cite{wu2022handling} introduce graph-specific environmental inference techniques. Nevertheless, their corresponding environmental exploitation strategies are not tailored to graphs. In contrast, our method, LECI, bypasses the environmental inference phase and instead introduces a graph-specific environmental exploitation algorithm, supported by justifications in Appx~\ref{app:environment-comparisons}. More environment-related discussions and motivations are available in Appx.~\ref{app:environment-significance}.

\vspace{-0.1cm}
\section{Method}\label{sec:method}
\vspace{-0.1cm}

OOD generalization is a longstanding problem because different distribution shifts exist in many different applications of machine learning.
As pointed out by \citet{kaur2022modeling}, understanding the data generation process is critical for solving OOD problems. Hence, we first formalize the target problems through three topology distribution shift assumptions from a causal and data generation perspective. We further identify the challenges of discovering causal subgraphs for addressing topology distribution shifts. Building on our analysis, we propose a technical solution and a practical implementation to jointly optimize label and environment causal independence in order to learn causal subgraphs effectively.


\vspace{-0.1cm}
\subsection{Causal perspective of graph OOD generalization}\label{sec:causal-graphs}
\vspace{-0.1cm}

\newlength{\oldintextsep}
\setlength{\oldintextsep}{\intextsep}

\setlength\intextsep{2pt}

\begin{wrapfigure}{r}{0.5\textwidth}
    \centering
    \resizebox{1\columnwidth}{!}{
    \includegraphics{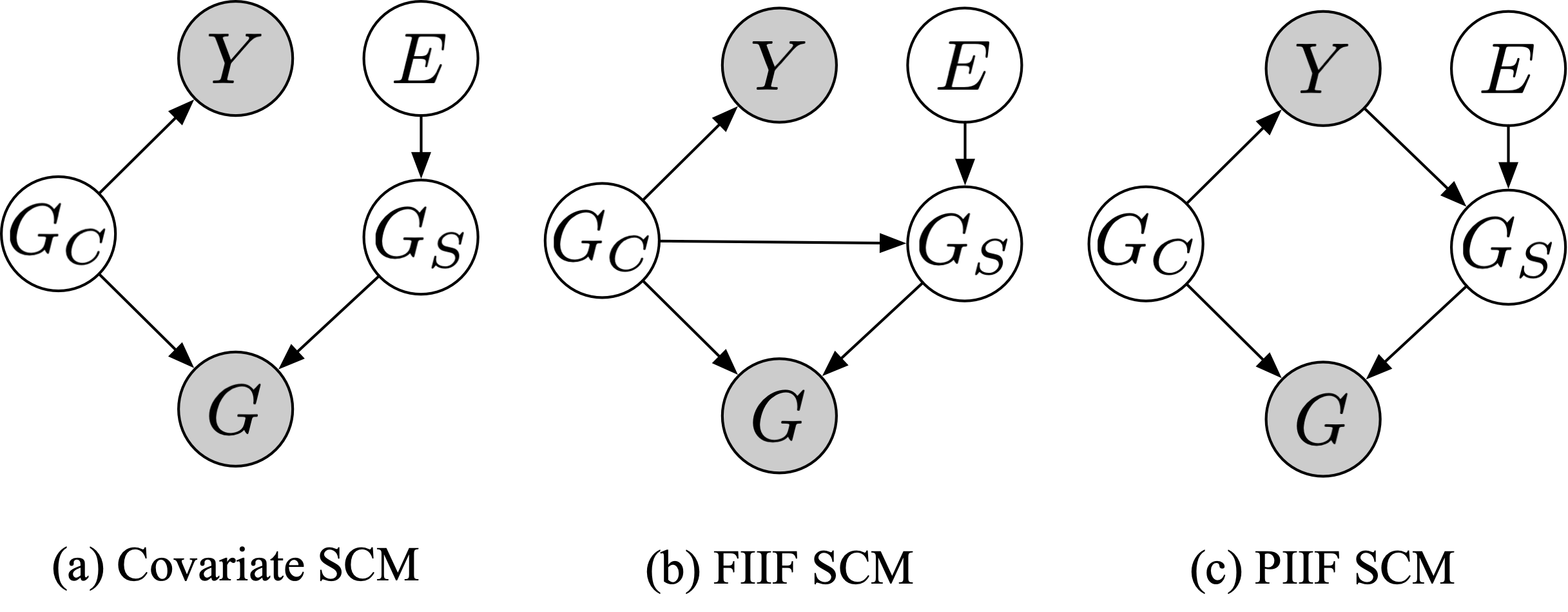}
    }\vspace{-0.2cm}
    \caption{Illustrations of \textbf{three distribution shift assumptions.} In these structural causal models (SCMs), each node denotes a variable, and each directional edge represents causation. Grey nodes are observable variables that can be directly accessed, but it does not mean that we are ``conditioned on" these variables.}
    \label{fig:scms}
\end{wrapfigure}

A major difference between traditional OOD and graph OOD tasks is the topological structure distribution shift in graph OOD tasks. To analyze graph structure shifts, we commonly assume that only a part of a graph determines its target $Y$; \eg, only the functional motif of a molecule determines its corresponding property.
Therefore, in graph distribution shift studies, we assume that each graph $G\in \mathcal{G}$ is composed of two subgraphs; namely, a causal subgraph $G_C \in \mathcal{G}$ and a spurious subgraph $G_S \in \mathcal{G}$ as shown in the structure causal models (SCMs)~\cite{pearl2009causality, peters2017elements} of Fig.~\ref{fig:scms}. For notational convenience, we denote graph union and subtraction operations as $+$ and $-$, respectively; \eg $G=G_C + G_S$ and $G_C = G - G_S$. It follows from the above discussions that $G_C$ is the only factor determining the target variable $Y \in \mathcal{Y}$, while $G_S$ is controlled by an exogenous environment $E$, such that graphs in the same environment share similar spurious subgraphs.

As illustrated in Fig.~\ref{fig:scms} (a), the covariate shift represents the most common distribution shift. Similar to image problems in which the same object is present with different backgrounds, graphs that share the same functional motif along with diverse graph backbones belong to the covariate shift. Under this setting, $G_S$ is only controlled by $E$. Models trained in this setting suffer from distribution shifts because the general empirical risk minimization (ERM) training objective $P(Y|G)$ is conditioned on the collider $G$, which builds a spurious correlation between $G_S$ and $Y$ through $Y\leftarrow G_C \rightarrow G \leftarrow G_S$, where $\rightarrow$ denotes a causal correlation. The fully informative invariant features (FIIF) and the partially informative invariant features (PIIF) assumptions, shown in Fig.~\ref{fig:scms} (b) and (c), are two other common assumptions proposed by invariant learning studies~\cite{arjovsky2019invariant, ahuja2021invariance, chen2022ciga}. In the graph FIIF assumption, $G_S$ is controlled by both $G_C$ and $E$, which constructs an extra spurious correlation $Y\leftarrow G_C \rightarrow G_S$. In contrast, the graph PIIF assumption introduces an anti-causal correlation between $G_S$ and $Y$. Our work focuses on the covariate shift generalization to develop our approach and then extends the solution to both FIIF and PIIF assumptions.

Essentially, we are addressing the distribution shifts between $P^{tr}(G, Y)$ and $P^{t e}(G, Y)$. Our core assumption, based on the Independence Causal Mechanism (ICM), is that there exists a component $G_C$ within $G$ such that $P^{tr}(Y |G_C)=P^{te}(Y|G_C)$. The shifts in distribution are exclusively attributed to interventions on $G_S=G-G_C$ that are encapsulated as an environment variable $E$, aligned with the invariant learning literature. The OOD dilemma arises when $P^{tr}(E) \neq P^{te}(E)$, leading to shifts in both $P^{tr}(G_S)$ and $P^{te }(G_S)$, and consequentially, $P^{t r}(G) \neq P^{t e}(G)$. It's crucial to note that we assume both $P^{tr}(G_C)$ and $P^{t e}(G_C)$ retain the same support. Explicitly, our target is to resolve OOD scenarios where the supports of $P^{t r}(E)$ and $P^{t e}(E)$ differ.

\setlength\intextsep{\oldintextsep}

\setlength{\textfloatsep}{11pt}

\begin{figure*}[!t]

\begin{center}
    \includegraphics[width=0.9\textwidth]{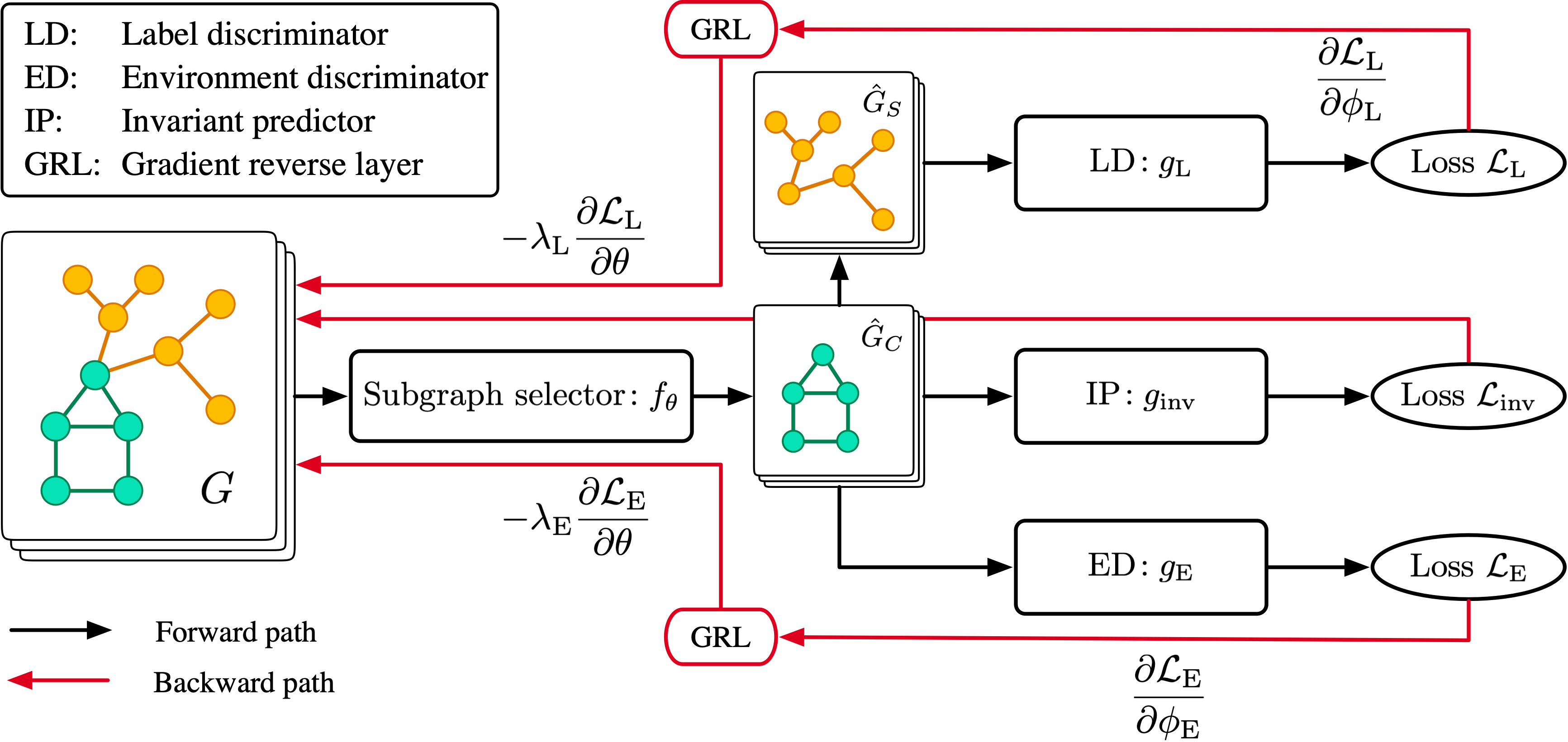}\vspace{-0.5cm}
    \caption{An illustration of the invariant prediction process by selecting the causal subgraph $\hat{G}_C$ using an interpretable subgraph discovery network in LECI.
    This network and training details are described in Sec.~\ref{sec:adversarial-implementation}. We multiply the reversed gradients with two hyperparameters $\lambda_\text{L}$ and $\lambda_\text{E}$ to control the adversarial training. For clear notations, we define the three losses $\mathcal{L}_\text{inv}$, $\mathcal{L}_\text{L}$, and $\mathcal{L}_\text{E}$ in Eq.~\ref{eq:inv}, \ref{eq:LA}, and \ref{eq:EA}, respectively.}
    \label{fig:architecture}
\end{center}
\end{figure*}

\vspace{-0.1cm}
\subsection{Subgraph discovery challenges}\label{sec:subgraph-discovery-challenges}
\vspace{-0.1cm}

A common strategy to make neural networks generalizable to structure distribution shifts in graph OOD tasks is to identify causal subgraphs for invariant predictions~\cite{wu2022dir, miao2022interpretable, chen2022ciga, li2023graph}.
However, correctly selecting subgraphs is challenging due to the following two precision challenges. 
(1) The selected causal subgraph may contain spurious structures, \ie, $\exists G_p \subseteq G_S, G_p\subseteq \hat{G}_C$, where $G_a\subseteq G_b$ denotes $G_a$ is a subgraph of $G_b$, and $\hat{G}_C$ represents the selected causal subgraph. 
(2) The model may not select the whole $G_C$ into $\hat{G}_C$ and leave a part of $G_C$ in $\hat{G}_S=G - \hat{G}_C$. Formally, $\exists G_p \subseteq G_C, G_p\subseteq \hat{G}_S$. The reason for the occurrence of these precision challenges will be further discussed in Appx.~\ref{app:discussions}.

\vspace{-0.1cm}
\subsection{Two causal independence properties}
\label{sec:two-independecies}
\vspace{-0.1cm}

To address the precision issues, it is necessary to distinguish between causal subgraphs and spurious subgraphs, both theoretically and in practice. To achieve this, under the covariate assumption, we introduce two causal independence properties for causal and spurious subgraphs, respectively; those are $E\indep G_C$ and $Y\indep G_S$. 

The first property $E\indep G_C$ is a crucial consideration to alleviate the first precision problem. As illustrated in Fig.~\ref{fig:scms} (a), since $G$ acts as a collider that blocks the correlation between $E$ and $G_C$, the environment factor $E$ should be independent of the causal subgraph $G_C$. Conversely, due to the direct causation between $E$ and $G_S$, the spurious subgraph $G_S$ is highly correlated with $E$. This correlation difference with $E$ indicates that enforcing independence between $E$ and the selected causal subgraph $\hat{G}_C$ can prevent parts of spurious subgraphs in $\hat{G}_C$ be included, thereby solving the first precision problem. 
Note that this independence also holds in FIIF and PIIF settings, because $G_S$ acts as another collider and blocks the left correlation paths between $G_C$ and $E$, \ie, $G_C\rightarrow G_S\leftarrow E$ and $G_C\rightarrow Y \rightarrow G_S \leftarrow E$.

We propose to enforce the second independence property $Y\indep G_S$ in order to address the second precision problem. Under the covariate assumption, the correlation between $Y$ and $G_S$ is blocked by $G$, leading to the independence between $Y$ and $G_S$. On the other side, the causal subgraph $G_C$ is intrinsically correlated with $Y$. Similarly, this correlation difference with $Y$ motivates us to enforce such independence, thus filtering parts of causal subgraphs out of the selected spurious subgraphs $\hat{G}_S$. This property $Y\indep G_S$, however, does not hold under the FIIF and PIIF settings, because of the $Y\leftarrow G_C \rightarrow G_S$ and the $Y\rightarrow G_S$ correlations. Therefore, we introduce a relaxed version of the independence property: for any $G_p$ that $G_S\subseteq G_p$, $I(G_S;Y)\le I(G_p;Y)$ (See Appx.~\ref{app:relaxed_independence_property}). 


\vspace{-0.1cm}
\subsection{Adversarial implementation}
\label{sec:adversarial-implementation}
\vspace{-0.1cm}

The above two causal independence properties are intuitively helpful in discovering causal subgraphs. In this section, we present our practical solution to learn the label and environment causal independence (LECI). The overall architecture of LECI is illustrated in Fig.~\ref{fig:architecture}. To be specific, we use an interpretable subgraph discovery network as the basic architecture, which consists of a subgraph selector $f_\theta: \mathcal{G} \mapsto \mathcal{G}$ and an invariant predictor $g_\text{inv}: \mathcal{G} \mapsto \mathcal{Y}$. Here, $f_\theta$ and $g_\text{inv}$ model the distributions $P_\theta(G_C|G)$ and $P_{\phi_\text{inv}}(Y|\hat{G}_C)$, where $\phi_\text{inv}$ is the model parameters. Specifically, following~\citet{miao2022interpretable}, our subgraph selector module selects subgraphs by sampling edges from the original graphs. However, the sampling process is not differentiable and blocks the gradient back-propagation. To overcome this problem, we use Gumbel-Sigmoid~\cite{jang2016categorical} to bypass the non-differentiable sampling process. The basic training loss for this subgraph discovery network can be written as
\begin{equation}\label{eq:inv}
    \mathcal{L}_\text{inv} = - \mathbb{E}\left[\log{P_{\phi_\text{inv}}(Y|\hat{G}_C)}\right].
\end{equation}

It is worth noting that $\hat{G}_C=f_\theta(G)$ and $\hat{G}_S=G-\hat{G}_C$ are complimentary. More details of the subgraph selector can be found in Appx.~\ref{app:subgraph-selector}

To incorporate the causal independence properties described in Sec.~\ref{sec:two-independecies}, 
we first enforce the condition $\hat{G}_C \indep E$. Since $\hat{G}_C \indep E$ is equivalent to $I(E;\hat{G}_C)=0$, and $I(E;\hat{G}_C)\ge 0$, the objective $I(E;\hat{G}_C)$ reaches its minimum when and only when $\hat{G}_C \indep E$. This leads us to define minimizing $I(E;\hat{G}_C)$ as our training criterion.


\begin{definition}
    The environment independence training criterion is 
    \begin{equation}
        \theta^*_\text{E}=\argmin_{\theta} I(E;\hat{G}_C).
    \end{equation}
\end{definition}

For a fixed $\theta$, $I(E;\hat{G}_C)$ is essentially a constant. However, the mutual information $I(E;\hat{G}_C)=\mathbb{E} \left[\log{\frac{P(E|\hat{G}_C)}{P(E)}}\right]$ cannot be calculated directly, since $P(E|\hat{G}_C)$ is unknown. Following Proposition 1 in GAN~\cite{goodfellow2020generative}, we introduce an optimal discriminator $g_\text{E}: \mathcal{G} \mapsto \mathcal{E}$ with parameters $\phi_E$ to approximate $P(E|\hat{G}_C)$ as $P_{\phi_\text{E}}(E|\hat{G}_C)$, minimizing the negative log-likelihood $-\mathbb{E} \left[ \log{P_{\phi_\text{E}}(E|\hat{G}_C)} \right]$. We then have the following two propositions:

\begin{proposition}\label{proposition:1}
    For $\theta$ fixed, the optimal discriminator $\phi_E$ is 
    \begin{equation}
        \phi_E^*= \argmin_{\phi_E} - \mathbb{E} \left[ \log{P_{\phi_\text{E}}(E|\hat{G}_C)} \right].
    \end{equation}
\end{proposition}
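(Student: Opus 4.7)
The plan is to mirror the argument used in Proposition~1 of the GAN paper that is already cited in the excerpt. The substance of the claim is that among all discriminators in a sufficiently expressive family, the minimizer of $-\mathbb{E}[\log P_{\phi_\text{E}}(E|\hat G_C)]$ is precisely the one that reproduces the true posterior $P(E|\hat G_C)$. Once this is established, the negative log-likelihood evaluated at $\phi_E^*$ becomes the conditional entropy $H(E|\hat G_C)$, which then plugs cleanly into the mutual-information expression $I(E;\hat G_C)=H(E)-H(E|\hat G_C)$ used downstream to turn the independence criterion into a tractable adversarial loss.

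First I would unpack the expectation, which is taken over the joint distribution $P(E,\hat G_C)$ induced by the data and the fixed subgraph selector $f_\theta$. Writing the joint as $P(\hat G_C)P(E|\hat G_C)$ and applying Fubini lets me express the objective as $\mathbb{E}_{\hat G_C}\bigl[-\sum_{e\in\mathcal{E}} P(E=e|\hat G_C)\log P_{\phi_\text{E}}(E=e|\hat G_C)\bigr]$. Because $P(\hat G_C)\ge 0$, it suffices to minimize the inner cross-entropy pointwise in $\hat G_C$.

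Next, I would invoke Gibbs' inequality (equivalently, non-negativity of the KL divergence): for any two distributions $p,q$ on $\mathcal{E}$, the cross-entropy $-\sum_e p(e)\log q(e)$ is minimized if and only if $q=p$, since the difference with the entropy of $p$ equals $\mathrm{KL}(p\,\|\,q)\ge 0$ with equality iff $p=q$. Applied pointwise to $p=P(\cdot|\hat G_C)$ and $q=P_{\phi_\text{E}}(\cdot|\hat G_C)$, this yields $P_{\phi_\text{E}^*}(E|\hat G_C)=P(E|\hat G_C)$ on $\mathrm{supp}(P(\hat G_C))$, which is the sense in which $\phi_E^*$ is \emph{optimal}. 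I would then note that the corresponding minimum value of the objective is the conditional entropy $H(E|\hat G_C)$, setting up the link to the mutual-information objective that follows in the paper.

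The main obstacle is not analytic but modeling-theoretic: the pointwise argument implicitly assumes that the parametric family $\{P_{\phi_\text{E}}(\cdot|\hat G_C)\}_{\phi_E}$ is expressive enough to realize the true posterior for (almost) every $\hat G_C$, in line with the standard non-parametric idealization used by~\citet{goodfellow2020generative}. I would flag this assumption explicitly, and also note the measure-theoretic caveat that the equality $P_{\phi_\text{E}^*}=P$ holds only $P(\hat G_C)$-almost surely, which is harmless because the subsequent mutual-information bound only involves integrals against $P(\hat G_C)$.
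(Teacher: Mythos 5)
Your argument is correct and is essentially the paper's own route: the paper disposes of this proposition by "applying the cross-entropy training criterion," and its proof of Proposition 3.3 makes this precise by rewriting the negative log-likelihood (up to the constants $I(E;\hat{G}_C)$ and $H(E)$) as $\text{KL}\left[P(E|\hat{G}_C)\,\|\,P_{\phi_\text{E}}(E|\hat{G}_C)\right]$ and invoking non-negativity of the KL divergence, which is exactly your Gibbs-inequality step applied pointwise and then integrated over $P(\hat{G}_C)$. Your explicit flagging of the expressiveness assumption and the almost-sure qualification is a reasonable addition that the paper leaves implicit via its appeal to the non-parametric idealization of \citet{goodfellow2020generative}.
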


\begin{proposition}\label{proposition:2}
    Denoting KL-divergence as $KL[\cdot\| \cdot]$, for $\theta$ fixed, the optimal discriminator $\phi_E$ is $\phi_E^*$, s.t.
    \begin{equation}
        \text{KL}\left[P(E|\hat{G}_C) \| P_{\phi_\text{E}^*}(E|\hat{G}_C)\right]= 0.
    \end{equation}
\end{proposition}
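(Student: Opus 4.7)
The plan is to show that the negative log-likelihood objective in Proposition~\ref{proposition:1} is, up to a $\phi_E$-independent constant, equal to the expected KL divergence between the true conditional $P(E|\hat{G}_C)$ and the modeled conditional $P_{\phi_E}(E|\hat{G}_C)$. Once this identity is established, the non-negativity of KL divergence together with a standard realizability assumption on the discriminator family (analogous to the ``space of arbitrary functions'' assumption used in Proposition 1 of Goodfellow \emph{et al.}'s GAN paper) will immediately yield the claim.

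Concretely, I would first expand the objective using the log-quotient trick:
\begin{equation*}
-\mathbb{E}\!\left[\log P_{\phi_E}(E|\hat{G}_C)\right]
= -\mathbb{E}\!\left[\log P(E|\hat{G}_C)\right] + \mathbb{E}\!\left[\log \frac{P(E|\hat{G}_C)}{P_{\phi_E}(E|\hat{G}_C)}\right],
\end{equation*}
where the expectation is taken with respect to the joint law of $(E,\hat{G}_C)$ induced by the fixed $\theta$. The first term is the conditional entropy $H(E|\hat{G}_C)$, which depends only on $\theta$ and the data-generating process, not on $\phi_E$. The second term, after conditioning on $\hat{G}_C$ and taking the inner expectation over $E$, becomes $\mathbb{E}_{\hat{G}_C}\!\left[\mathrm{KL}\!\left[P(E|\hat{G}_C)\,\|\,P_{\phi_E}(E|\hat{G}_C)\right]\right]$.

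Thus minimizing the negative log-likelihood over $\phi_E$ is equivalent to minimizing this expected KL divergence. Since KL divergence is non-negative and vanishes iff the two distributions agree almost everywhere, the infimum $0$ is attainable precisely when $P_{\phi_E}(E|\hat{G}_C) = P(E|\hat{G}_C)$. Under the (nonparametric) assumption that the discriminator class is sufficiently expressive to contain the true conditional, this minimizer is realized by $\phi_E^*$, giving $\mathrm{KL}\!\left[P(E|\hat{G}_C)\,\|\,P_{\phi_E^*}(E|\hat{G}_C)\right]=0$ pointwise in $\hat{G}_C$, which is the desired conclusion.

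The main obstacle I anticipate is not the algebraic manipulation, which is standard, but rather the implicit realizability/capacity assumption on $g_\text{E}$: without it, the argument only shows that $\phi_E^*$ minimizes the expected KL over the parametric family, not that the minimum is zero. I would make this assumption explicit in the proof (mirroring the treatment in GAN), and briefly note that in practice this is approximated by using a sufficiently expressive graph neural network as the discriminator. A secondary care point is that the decomposition requires $P_{\phi_E}(E|\hat{G}_C)>0$ wherever $P(E|\hat{G}_C)>0$, which is ensured by parameterizing the discriminator output through a softmax.
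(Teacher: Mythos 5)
Your proof is correct and follows essentially the same route as the paper's: you decompose the negative log-likelihood into a $\phi_E$-independent constant plus the expected KL term, then invoke non-negativity of KL together with realizability of the discriminator class. The only cosmetic difference is that you name the constant $H(E|\hat{G}_C)$ directly, while the paper writes it as $-\big(I(E;\hat{G}_C) - H(E)\big)$, which is the same quantity; your explicit remarks on the realizability assumption and on the positivity of $P_{\phi_E}$ where the softmax is used are good hygiene that the paper leaves implicit.
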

  
Proposition~\ref{proposition:1} can be proved straightforwardly by applying the cross-entropy training criterion, while the proof of Proposition~\ref{proposition:2} is provided in Appx~\ref{app:theory}. With these propositions, the mutual information can be computed with the help of the optimal discriminator $\phi_E^*$. According to Proposition~\ref{proposition:2}, we have:
\begin{equation}
    \begin{aligned}
I(E;\hat{G}_C)&=\mathbb{E} \left[ \log{P_{\phi_\text{E}^*}(E|\hat{G}_C)} \right] + H(E) + \text{KL}\left[P(E|\hat{G}_C) \| P_{\phi_\text{E}^*}(E|\hat{G}_C)\right] \\
&= \mathbb{E} \left[ \log{P_{\phi_\text{E}^*}(E|\hat{G}_C)} \right] + H(E) + 0.
\end{aligned}
\end{equation}

Thus, by disregarding the constant $H(E)$, the training criterion becomes:
\begin{equation}\label{eq:EA}
    \begin{aligned}
    \theta^*_\text{E} &=\argmin_{\theta} I(E;\hat{G}_C) 
    = \argmin_{\theta} \mathbb{E} \left[ \log{P_{\phi_\text{E}^*}(E|\hat{G}_C)} \right] 
    = \argmin_{\theta} \left\{\max_{\phi_E} \mathbb{E} \left[ \log{P_{\phi_\text{E}}(E|\hat{G}_C)} \right]\right\}.
    \end{aligned}
\end{equation}
We can reformulate this training criterion in terms of a negative log-likelihood form ($\mathcal{L}_\text{E}$) to define our environment adversarial training criterion (EA):
\begin{equation}\label{eq:LA}
    \Theta^*_\text{E} = \left\{\theta^*_\text{E}: \theta^*_\text{E} \in \argmax_{\theta} \{\min_{\phi_E} \mathcal{L}_\text{E} \}\right\} = \left\{\theta^*_\text{E}: \theta^*_\text{E} \in \argmax_{\theta} \left\{\min_{\phi_E} -\mathbb{E} \left[ \log{P_{\phi_\text{E}}(E|\hat{G}_C)} \right] \right\}\right\}.
\end{equation}

To enforce $\hat{G}_S \indep Y$, we introduce a label discriminator $g_\text{L}: \mathcal{G} \mapsto \mathcal{Y}$ to model $P(Y|\hat{G}_S)$ with parameters $\phi_\text{L}$ as $P_{\phi_\text{L}}(Y|\hat{G}_S)$. This leads to a symmetric label adversarial training criterion (LA):
\begin{equation}
    \Theta^*_\text{L} = \left\{\theta^*_\text{L}: \theta^*_\text{L} \in \argmax_{\theta} \{\min_{\phi_L} \mathcal{L}_\text{L} \}\right\} = \left\{\theta^*_\text{L}: \theta^*_\text{L} \in \argmax_{\theta} \left\{\min_{\phi_L} -\mathbb{E} \left[ \log{P_{\phi_\text{L}}(E|\hat{G}_S)} \right] \right\}\right\}.
\end{equation}

It's important to note that $\Theta^*_\text{E}$ and $\Theta^*_\text{L}$ are not unique optimal parameters. Instead, they are two optimal parameter sets, including parameters that satisfy the training criteria. We will discuss $\Theta^*_\text{E}$ and $\Theta^*_\text{L}$, along with a relaxed version of LA, in further detail in the theoretical analysis section~\ref{sec:theoretical-results}.

Intuitively, as illustrated in Fig.~\ref{fig:architecture}, the models $g_\text{E}$ and $g_\text{L}$, acting as an environment discriminator and a label discriminator, are optimized to minimize the negative log-likelihoods. In contrast, the subgraph selector $f_\theta$ tries to adversarially maximize the losses by reversing the gradients during back-propagation. Through this adversarial training process, we aim to select a causal subgraph $\hat{G}_C$ that is independent of the environment variables $E$, while simultaneously eliminating causal information from $\hat{G}_S$ such that the selected causal subgraph $\hat{G}_C$ can retain as much causal information as possible for the accurate inference of the target variable $Y$.

\subsection{Theoretical results}\label{sec:theoretical-results}

Under the framework of the three data generation assumptions outlined in Section~\ref{sec:causal-graphs}, we provide theoretical guarantees to address the challenges associated with subgraph discovery. Detailed proofs for the following lemmas and theorems can be found in Appx.~\ref{app:theory}. We initiate our analysis with a lemma for $\hat{G}_C\indep E$:

\begin{lemma}\label{lemma:pre-ea}
Given a subgraph $G_p$ of input graph $G$ and SCMs (Fig.~\ref{fig:scms}), it follows that $G_p \subseteq G_C$ if and only if $G_p \indep E$.
\end{lemma}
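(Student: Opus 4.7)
I will prove each direction of the biconditional using d-separation on the three SCMs of Fig.~\ref{fig:scms} together with the causal/spurious decomposition $G = G_C + G_S$ from Sec.~\ref{sec:causal-graphs}. A single argument handles all three settings (covariate, FIIF, PIIF), because in each one $G_C$ turns out to be d-separated from $E$ (as already hinted at informally in Sec.~\ref{sec:two-independecies}).

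\emph{Forward direction} ($G_p \subseteq G_C \Rightarrow G_p \indep E$). First I would establish $G_C \indep E$ by reading off d-separation from each SCM. In the covariate case, $E$ only enters $G_S$, and the path $E \to G_S \to G \leftarrow G_C$ is blocked at the unconditioned collider $G$. In FIIF, $G_S$ is itself a collider on $E \to G_S \leftarrow G_C$, blocking the only path. In PIIF, the path $E \to G_S \leftarrow Y \leftarrow G_C$ is again blocked at the collider $G_S$. Hence $G_C \indep E$ across all three SCMs. Since $G_p \subseteq G_C$ presents $G_p$ as a deterministic function of $G_C$, independence from $E$ is preserved under this function and $G_p \indep E$ follows immediately.

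\emph{Backward direction} ($G_p \indep E \Rightarrow G_p \subseteq G_C$, by contrapositive). Suppose $G_p \not\subseteq G_C$. Then the residual piece of $G_p$ that is not contained in $G_C$ is a non-empty subgraph of $G_S$; call it $G_p'$. Because $E \to G_S$ is a direct causation in each SCM and we rely on faithfulness of the induced distribution, $G_p'$ carries genuine information about $E$, i.e.\ $G_p' \not\indep E$. Since $G_p'$ is determined by $G_p$, the mutual-information chain $I(G_p; E) \ge I(G_p'; E) > 0$ gives $G_p \not\indep E$, closing the contrapositive.

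\emph{Main obstacle.} The hard part will be cleanly justifying the step ``$G_p$ contains a non-empty spurious piece $\Rightarrow G_p \not\indep E$''. The SCMs only posit an arrow $E \to G_S$; ruling out pathological cases where this arrow is active on $G_S$ as a whole but vacuous on the particular sub-piece that happens to sit inside $G_p$ requires a non-degeneracy/faithfulness assumption on the data-generating process. I would either invoke a standard SCM faithfulness hypothesis, or state an explicit premise that every non-empty subgraph of $G_S$ carries strictly positive mutual information with $E$, and flag this assumption up front so that the contrapositive closes without ambiguity. The forward direction, by contrast, is essentially a direct consequence of graph d-separation and should require no additional hypothesis.
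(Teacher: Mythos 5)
Your proof takes essentially the same route as the paper's: the forward direction uses the monotonicity $I(E;G_p)\le I(E;G_C)$ after establishing $G_C\indep E$ from the SCMs, and the backward direction isolates a non-empty spurious residue $G_p'\subseteq G_S$ and argues $I(G_p;E)\ge I(G_p';E)>0$. You are in fact more careful than the paper in explicitly flagging the faithfulness hypothesis behind ``every non-empty subgraph of $G_S$ carries positive mutual information with $E$''; the paper asserts this step directly as a consequence of the SCMs without naming the assumption.
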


This proof hinges on the data generation assumption that any substructures external to $G_C$ (i.e., $G_S$) maintain associations with $E$. Consequently, any $\hat{G}_C$ that fulfills $\hat{G}_C \indep E$ will be a subgraph of $G_C$, denoted as $\hat{G}_C\subseteq G_C$. We then formulate the EA training lemma as follows:

\begin{lemma}\label{lemma:EA}
Given SCMs (Fig.~\ref{fig:scms}), the predicted $\hat{G}_C=f_\theta(G)$, and the EA training criterion (Eq.~\ref{eq:EA}), $\hat{G}_C\subseteq G_C$ if and only if $\theta\in \Theta_\text{E}^*$.
\end{lemma}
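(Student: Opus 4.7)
The plan is to prove Lemma~\ref{lemma:EA} by chaining three equivalences: (i) the adversarial criterion defining $\Theta^*_\text{E}$ is equivalent to minimizing the mutual information $I(E;\hat{G}_C)$ over $\theta$; (ii) this minimum equals $0$ and is attained exactly when $\hat{G}_C\indep E$; and (iii) by Lemma~\ref{lemma:pre-ea}, $\hat{G}_C\indep E$ is equivalent to $\hat{G}_C\subseteq G_C$. The two directions of the ``iff'' then follow directly from composing these equivalences.

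For step (i), I would invoke Propositions~\ref{proposition:1} and~\ref{proposition:2} as already developed in the text. Proposition~\ref{proposition:1} tells us that the inner minimization over $\phi_E$ of $-\mathbb{E}[\log P_{\phi_\text{E}}(E|\hat{G}_C)]$ yields a discriminator $\phi_E^*$, and Proposition~\ref{proposition:2} guarantees that this discriminator produces the true conditional, giving $\mathrm{KL}[P(E|\hat{G}_C)\|P_{\phi_\text{E}^*}(E|\hat{G}_C)]=0$. Substituting back into the decomposition
\begin{equation*}
I(E;\hat{G}_C)=\mathbb{E}\left[\log P_{\phi_\text{E}^*}(E|\hat{G}_C)\right]+H(E)+\mathrm{KL}\!\left[P(E|\hat{G}_C)\,\|\,P_{\phi_\text{E}^*}(E|\hat{G}_C)\right]
\end{equation*}
shows that, up to the $\theta$-independent constant $H(E)$, the outer maximization over $\theta$ of $\min_{\phi_E}\mathcal{L}_\text{E}$ coincides with the minimization of $I(E;\hat{G}_C)$. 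Hence $\Theta^*_\text{E}=\argmin_\theta I(E;\hat{G}_C)$.

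For step (ii), I would use two standard facts about mutual information: $I(E;\hat{G}_C)\ge 0$ with equality iff $\hat{G}_C\indep E$. To finish the ``$\Leftarrow$'' direction I must additionally show the minimum is attainable at $0$, i.e.\ that some $\theta$ exists with $\hat{G}_C\indep E$. This is where Lemma~\ref{lemma:pre-ea} is crucial in the reverse direction: there is at least one trivially valid choice of selector (e.g.\ one that outputs an empty subgraph, which is vacuously a subgraph of $G_C$) making $\hat{G}_C\subseteq G_C$ and therefore $\hat{G}_C\indep E$, so the infimum $0$ is realized. Consequently $\theta\in\Theta^*_\text{E}$ iff $I(E;\hat{G}_C)=0$ iff $\hat{G}_C\indep E$, and applying Lemma~\ref{lemma:pre-ea} with $G_p=\hat{G}_C$ converts this to $\hat{G}_C\subseteq G_C$, which completes both directions.

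The main obstacle I anticipate is the careful bookkeeping around step (i): one must argue that the sup-inf swap implicit in the adversarial reformulation genuinely characterizes minimizers of $I(E;\hat{G}_C)$, not merely stationary points, and that the optimal discriminator $\phi_E^*$ appearing in Proposition~\ref{proposition:2} is realizable by the hypothesis class of $g_\text{E}$. If one treats the discriminator as lying in a nonparametric family (as in the GAN-style derivation already cited), this is clean; otherwise one inherits an approximation gap that should be flagged as an assumption. A secondary, more minor point is that Lemma~\ref{lemma:pre-ea} is applied to the random subgraph $\hat{G}_C$ rather than a fixed $G_p$, so I would briefly note that $\hat{G}_C=f_\theta(G)$ is a measurable function of $G$ and thus the ``subgraph'' conclusion transfers pointwise (almost surely).
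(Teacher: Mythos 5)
Your proof follows the same chain of equivalences the paper uses: $\theta\in\Theta^*_\text{E}$ iff $I(E;\hat{G}_C)$ is minimized iff $I(E;\hat{G}_C)=0$ iff $\hat{G}_C\indep E$ iff $\hat{G}_C\subseteq G_C$ via Lemma~\ref{lemma:pre-ea}. You are in fact slightly more careful than the paper's one-line proof, which jumps from ``$\theta$ minimizes $I(E;\hat{G}_C)$'' to ``$I(E;\hat{G}_C)=0$'' without establishing attainability of the zero minimum; your empty-subgraph observation (and your aside about nonparametric realizability of the optimal discriminator $\phi_E^*$) fills in minor but genuine gaps in the paper's argument.
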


This lemma paves the way for an immediate conclusion: given that $\hat{G}_C$ is not unique under the EA training criterion, the optimal $\Theta_\text{E}^*$ is likewise non-unique, rather $\Theta_\text{E}^*$ represents a set of parameters. Any $\theta \in \Theta_\text{E}^*$ adheres to the environment independence property. Similarly, under the covariate SCM assumption, when we enforce $\hat{G}_S \indep Y$, $\Theta_\text{L}^*$ represents an optimal parameter set satisfying the label independence property.

\begin{lemma}\label{lemma:LA}
Given the covariate SCM assumption and the LA criterion (Eq.~\ref{eq:LA}), $\hat{G}_S\subseteq G_S$ if and only if $\theta\in \Theta_\text{L}^*$.
\end{lemma}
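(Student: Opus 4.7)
The proof will mirror that of Lemma~\ref{lemma:EA}, with $(Y,\hat{G}_S)$ playing the role of $(E,\hat{G}_C)$ and the covariate SCM of Fig.~\ref{fig:scms}(a) taking the place of the general SCMs. The argument splits into two stages: a structural step relating subgraph containment to independence from $Y$, and a minimax step converting the LA criterion into that independence condition.

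First I would establish a label-side analog of Lemma~\ref{lemma:pre-ea}: under the covariate SCM, for any subgraph $G_p$ of $G$, $G_p\subseteq G_S$ if and only if $G_p\indep Y$. The forward direction follows by d-separation on Fig.~\ref{fig:scms}(a): the only path connecting $G_S$ to $Y$ is $G_S\rightarrow G\leftarrow G_C\rightarrow Y$, and since the collider $G$ is not conditioned on, $G_S\indep Y$ and hence any $G_p\subseteq G_S$ inherits this independence. The reverse direction uses the data-generation assumption from Sec.~\ref{sec:causal-graphs} that $G_C$ is precisely the portion of $G$ causally determining $Y$: if $G_p$ contained any edge of $G_C$, that edge would be directly causally linked to $Y$, contradicting $G_p\indep Y$; therefore $G_p$ must lie entirely in $G_S$.

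Second I would replay the derivation of Eq.~\ref{eq:EA} with the label discriminator $g_\text{L}$ and target $Y$ in place of $g_\text{E}$ and $E$. The direct analogs of Proposition~\ref{proposition:1} and Proposition~\ref{proposition:2} give, for each fixed $\theta$, an optimal discriminator $\phi_L^*$ with $P_{\phi_L^*}(Y\mid\hat{G}_S)=P(Y\mid\hat{G}_S)$, so that $\min_{\phi_L}\mathcal{L}_\text{L}=H(Y\mid\hat{G}_S)$. Since $H(Y)$ is constant in $\theta$, the outer $\argmax_\theta H(Y\mid\hat{G}_S)$ coincides with $\argmin_\theta I(Y;\hat{G}_S)$, whose minimum value $0$ is attained exactly when $\hat{G}_S\indep Y$. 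Hence $\theta\in\Theta_\text{L}^*$ iff $\hat{G}_S\indep Y$, and combining with the structural step yields $\theta\in\Theta_\text{L}^*$ iff $\hat{G}_S\subseteq G_S$, which is the claim.

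The main obstacle is the reverse direction of the structural step. Formally, one must rule out the degenerate possibility that some subgraph containing strict parts of $G_C$ is nonetheless independent of $Y$ through a coincidental cancellation in the joint distribution; this requires a faithfulness-type condition already implicit in the SCM-based generation process of Sec.~\ref{sec:causal-graphs}. It is also this step that constrains Lemma~\ref{lemma:LA} to the covariate assumption: under FIIF or PIIF, the additional arrows $G_C\rightarrow G_S$ and $Y\rightarrow G_S$ create a genuine dependence between $G_S$ and $Y$, breaking both directions of the structural equivalence, which is exactly why Sec.~\ref{sec:two-independecies} retreats to the relaxed inequality $I(G_S;Y)\le I(G_p;Y)$ in those settings.
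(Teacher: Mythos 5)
Your proposal is correct, and it is worth noting that the paper itself never writes out a proof of Lemma~\ref{lemma:LA}: the appendix jumps from the proof of Lemma~\ref{lemma:EA} directly to that of Theorem~\ref{thm:guarantee-covariate}, which simply invokes Lemma~\ref{lemma:LA} as established, and the main text only remarks that the lemma holds ``similarly.'' What you have written is exactly the missing proof the paper implies: a label-side counterpart of Lemma~\ref{lemma:pre-ea} (under the covariate SCM, $G_p\subseteq G_S$ iff $G_p\indep Y$) followed by the same minimax reduction used for Lemma~\ref{lemma:EA}, turning the LA criterion into $\argmin_\theta I(Y;\hat G_S)=0$, i.e., $\hat G_S\indep Y$. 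One stylistic difference: the paper's proof of Lemma~\ref{lemma:pre-ea} argues both directions by contradiction and uses mutual-information monotonicity under subgraph inclusion ($I(E;G_C)\ge I(E;G_p)$ for $G_p\subseteq G_C$), whereas your forward direction invokes d-separation at the collider $G$ directly; both are valid and interchangeable here. Your closing remark about needing a faithfulness-type condition for the reverse direction is a genuine point of rigor that the paper also glosses over (its Lemma~\ref{lemma:pre-ea} simply asserts ``According to the three SCMs, $G_q$ is not independent of $E$''), so you have not introduced a new gap but rather surfaced an assumption the paper treats as implicit in its data-generation SCM. Your explanation of why the covariate assumption is essential --- that FIIF/PIIF introduce $G_C\rightarrow G_S$ and $Y\rightarrow G_S$ arrows, breaking $G_S\indep Y$ and forcing the retreat to the relaxed inequality used in Theorem~\ref{thm:guarantee-universal} --- matches the paper's own discussion in Sec.~\ref{sec:two-independecies}.
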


Our first subgraph discovery guarantee is a direct corollary of Lemma~\ref{lemma:EA} and Lemma~\ref{lemma:LA}.

\begin{theorem}\label{thm:guarantee-covariate}
Under the covariate SCM assumption and both EA and LA training criteria (Eq.~\ref{eq:EA} and \ref{eq:LA}), it follows that $\hat{G}_C = G_C$ if and only if $\theta \in \Theta_\text{E}^* \cap \Theta_\text{L}^* $.
\end{theorem}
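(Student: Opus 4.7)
The plan is to derive Theorem~\ref{thm:guarantee-covariate} directly as a corollary of Lemmas~\ref{lemma:EA} and~\ref{lemma:LA}, exploiting the complementarity between $\hat{G}_C$ and $\hat{G}_S$ established in Section~\ref{sec:adversarial-implementation} (namely $\hat{G}_S = G - \hat{G}_C$, with the analogous decomposition $G = G_C + G_S$ on the ground-truth side). The only real content is to translate the ``$\hat{G}_S \subseteq G_S$'' conclusion of Lemma~\ref{lemma:LA} into a containment statement about $\hat{G}_C$, so that both lemmas speak about the same object and can be intersected.

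First, I would invoke Lemma~\ref{lemma:EA} to obtain the equivalence $\theta \in \Theta_\text{E}^* \iff \hat{G}_C \subseteq G_C$. Next, using the complementarity identities $\hat{G}_C + \hat{G}_S = G$ and $G_C + G_S = G$, I would show that $\hat{G}_S \subseteq G_S$ is logically equivalent to $G_C \subseteq \hat{G}_C$: subtracting both sides of $\hat{G}_S \subseteq G_S$ from $G$ flips the containment and yields $G - G_S \subseteq G - \hat{G}_S$, i.e.\ $G_C \subseteq \hat{G}_C$; the reverse direction is symmetric. Combining this with Lemma~\ref{lemma:LA} gives $\theta \in \Theta_\text{L}^* \iff G_C \subseteq \hat{G}_C$.

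Finally, I would conjoin the two equivalences: $\theta \in \Theta_\text{E}^* \cap \Theta_\text{L}^*$ holds if and only if both $\hat{G}_C \subseteq G_C$ and $G_C \subseteq \hat{G}_C$ hold, which by mutual containment is equivalent to $\hat{G}_C = G_C$. This establishes both directions of the ``if and only if'' simultaneously.

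The proof is essentially bookkeeping, so I do not expect a substantive obstacle; the one place to be careful is the translation step where complementarity is used to convert a containment on $\hat{G}_S$ into a containment on $\hat{G}_C$. This step implicitly relies on the graph subtraction operator behaving as set-theoretic complement relative to $G$ (as defined in Section~\ref{sec:causal-graphs}), so if the operator $-$ were only partially well-defined one would have to justify that $\hat{G}_C$ and $G_C$ are both subgraphs of the same ambient $G$ before the deduction goes through. Under the conventions set up in the paper this is immediate, so the theorem follows as a clean corollary.
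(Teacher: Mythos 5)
Your proposal is correct and follows essentially the same route as the paper's own proof: both arguments invoke Lemma~\ref{lemma:EA} and Lemma~\ref{lemma:LA}, use the complementarity identities $\hat{G}_C = G - \hat{G}_S$ and $G_C = G - G_S$ to convert $\hat{G}_S \subseteq G_S$ into $G_C \subseteq \hat{G}_C$, and conclude equality by mutual containment. Your closing remark about the subtraction operator acting as a relative complement with respect to the common ambient graph $G$ is a reasonable point of care, but as you note it is immediate under the paper's conventions.
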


However, the aforementioned theorem does not ensure causal subgraph discovery under FIIF and PIIF SCMs. Hence, we need to consider the relaxed property of $\hat{G}_S \indep Y$ discussed in Sec.~\ref{sec:two-independecies} to formulate a more robust subgraph discovery guarantee.

\begin{theorem}\label{thm:guarantee-universal}
Given the covariate/FIIF/PIIF SCM assumptions and both EA and LA training criteria (Eq.~\ref{eq:EA} and \ref{eq:LA}), it follows that $G_C = \hat{G}_C$ if and only if, under the optimal EA training, the LA criterion is optimized, \ie,$\theta \in \argmax_{\theta \in \Theta^*_\text{E}} \left\{\min_{\phi_L} \mathcal{L}_\text{L} \right\}$.
\end{theorem}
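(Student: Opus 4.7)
The plan is to reduce the two-stage optimization $\argmax_{\theta \in \Theta^*_\text{E}} \min_{\phi_L} \mathcal{L}_\text{L}$ to a single constrained mutual-information minimization, and then invoke the relaxed independence property from Sec.~\ref{sec:two-independecies} to pin down the unique minimizer as $G_C$. The argument proceeds in three steps, all of which can be made uniform across the covariate, FIIF, and PIIF SCMs.

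First, I would use Lemma~\ref{lemma:EA} to translate the outer constraint $\theta \in \Theta^*_\text{E}$ into the structural statement $\hat{G}_C \subseteq G_C$, equivalently $G_S \subseteq \hat{G}_S$. This step is SCM-uniform, since the supporting Lemma~\ref{lemma:pre-ea} only requires that every substructure outside $G_C$ remain correlated with $E$, which the Bayes-ball discussion following Lemma~\ref{lemma:pre-ea} already verifies under FIIF and PIIF (the additional colliders $G_C \to G_S \leftarrow E$ and $G_C \to Y \to G_S \leftarrow E$ do not open new paths from $G_C$ to $E$).

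Next, I would apply the direct analogue of Propositions~\ref{proposition:1} and~\ref{proposition:2} to the label discriminator $g_\text{L}$: for every fixed $\theta$, the inner minimizer $\phi^*_\text{L}$ satisfies $P_{\phi^*_\text{L}}(Y\mid \hat{G}_S) = P(Y\mid \hat{G}_S)$, so that
\[
\min_{\phi_\text{L}} \mathcal{L}_\text{L} \;=\; H(Y\mid \hat{G}_S) \;=\; H(Y) - I(Y;\hat{G}_S).
\]
Because $H(Y)$ does not depend on $\theta$, the outer argmax in the theorem reduces to
\[
\argmax_{\theta \in \Theta^*_\text{E}} \min_{\phi_\text{L}} \mathcal{L}_\text{L} \;=\; \argmin_{\theta \in \Theta^*_\text{E}} I(Y;\hat{G}_S).
\]
Finally, I would invoke the relaxed property: for every $G_p$ with $G_S \subseteq G_p$, $I(G_S;Y) \le I(G_p;Y)$, with equality precisely when $G_p = G_S$. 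Combined with the inclusion $G_S \subseteq \hat{G}_S$ from the first step, the constrained minimization is solved exactly when $\hat{G}_S = G_S$, i.e., $\hat{G}_C = G_C$. Both directions of the iff then fall out: membership of $\theta$ in the argmax set forces $\hat{G}_C \subseteq G_C$ (via EA) and tightness of the relaxed bound (via the reduced LA), hence $\hat{G}_C = G_C$; conversely, $\hat{G}_C = G_C$ places $\theta$ in $\Theta^*_\text{E}$ and attains the lower bound $I(Y;G_S)$, so $\theta$ is in the argmax set.

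\textbf{Main obstacle.} The delicate ingredient is the \emph{strict} version of the relaxed property under FIIF and PIIF, because only strict inequality yields uniqueness of the minimizer at $\hat{G}_S = G_S$. In FIIF, any proper extension $G_p \supsetneq G_S$ into $G_C$ must open an additional active path to $Y$ through $G_C \to Y$; in PIIF, the corresponding path is $G_C \to Y \to G_S$. I would discharge this by a $d$-separation argument on the SCMs of Fig.~\ref{fig:scms}, showing that appending any non-trivial portion of $G_C$ to $G_S$ unblocks a new information path to $Y$ and therefore strictly increases $I(Y;\hat{G}_S)$. This is the one place where SCM-specific reasoning is genuinely needed; the EA half of the argument, by contrast, transfers uniformly and requires no case analysis.
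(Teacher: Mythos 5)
Your proposal follows essentially the same route as the paper's proof: Lemma~\ref{lemma:EA} converts the outer constraint $\theta\in\Theta^*_\text{E}$ into the inclusion $G_S\subseteq\hat{G}_S$, the inner minimization over $\phi_L$ reduces $\mathcal{L}_\text{L}$ to $H(Y\mid\hat{G}_S)$ so that the argmax becomes a constrained minimization of $I(Y;\hat{G}_S)$, and the relaxed property then identifies $\hat{G}_S=G_S$ as the minimizer, i.e., $\hat{G}_C=G_C$. You actually go beyond the paper in two places where its proof is terse: you make the reduction $\min_{\phi_L}\mathcal{L}_\text{L}=H(Y\mid\hat{G}_S)$ explicit via the analogue of Propositions~\ref{proposition:1}--\ref{proposition:2}, and you correctly flag that the relaxed property as stated is only a non-strict inequality, so the uniqueness of the minimizer (needed for the direction in which membership in the argmax set forces $\hat{G}_C=G_C$) requires the strict version you propose to establish, a point the paper's proof passes over by simply asserting that $I(\hat{G}_S;Y)$ ``can be minimized to be $\hat{G}_S=G_S$.''
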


The proof intuition hinges on the assumption that $G_S \subseteq \hat{G}_S$ under the premise of $\hat{G}_C \indep E$. Hence, taking into account the relaxed property, the mutual information $I(\hat{G}_S;Y)$ can be minimized when $\hat{G}_S = G_S$, \ie, $\hat{G}_C = G_C$. This, in turn, suggests that the LA optimization should satisfy $\Theta^*_\text{L} \subseteq \Theta^*_\text{E}$, denoted as $\Theta^*_\text{L} = \argmax_{\theta \subseteq \Theta^*_\text{E}} \left\{\min_{\phi_L} \mathcal{L}_\text{L} \right\}$. As a result, the EA training criterion takes precedence over the LA criterion during training, which is empirically reflected by the relative weights of the hyperparameters in practical experiments.

\vspace{-0.1cm}
\subsection{Pure feature shift consideration}
\vspace{-0.1cm}

Even though removing the spurious subgraphs can make the prediction more invariant, the proposed method primarily focuses on the graph structure perspective. However, certain types of spurious information may only be present in the node features, referred to as pure feature shifts. Thus, we additionally apply a technique to address the distribution shifts on node features $X$. In particular, we transform the original node features into environment-free node features by removing the environment information through adversarially training with a small feature environment discriminator. Ideally, after this feature filtering, there are no pure feature shifts left; thus, the remaining shifts can be eliminated by the causal subgraph selection process. The specific details of this pre-transformation step can be found in the Appx.~\ref{app:PFSC}.

\vspace{-0.1cm}
\subsection{Discussion of computational complexity and assumption comparisons}\label{sec:relations-with-prior-methods}
\vspace{-0.1cm}

The time complexity of LECI is $O(md + nd^2)$ where $n$, $m$, and $d$ denote the number of nodes, edges, and feature dimensions, respectively. To be more specific, the message-passing neural networks have time complexity $O(md+nd^2)$. Our environment exploitation regularizations have time complexity $O(1)$ without any extra cost. Therefore, the overall time complexity of our LECI is $O(md + nd^2 + 1) = O(md + nd^2)$.
OOD generalization performance cannot be universal and is highly correlated to the generalizability of the method's assumptions. Therefore, we provide theory and assumption comparisons with previous works~\cite{wu2022dir, miao2022interpretable, chen2022ciga} in Appx.~\ref{app:assumption-comparisons}.



\vspace{-0.1cm}
\section{Experiments}\label{sec:experiments}
\vspace{-0.1cm}


In this section, we conduct extensive experiments to evaluate our proposed LECI. Specifically, we aim to answer the following 5 research questions through our experiments. \textbf{RQ1:} Does the proposed method address the previous unsolved structure shift and feature shift problems? \textbf{RQ2:} Does the proposed method perform well in complex real-world settings? \textbf{RQ3:} Is the proposed method robust across various hyperparameter settings? \textbf{RQ4:} Is the training process of the proposed method stable enough under complex OOD conditions? \textbf{RQ5:} Are all components in the proposed method important? The comprehensive empirical results and detailed analysis demonstrate that LECI is an effective and practical solution to address graph OOD problems in various settings.



\vspace{-0.1cm}
\subsection{Baselines}
\vspace{-0.1cm}

We compare our LECI with the empirical risk minimization (ERM), a graph pooling baseline ASAP~\cite{ranjan2020asap}, 4 traditional OOD baselines, and, 4 recent graph-specific OOD baselines. The traditional OOD baselines include IRM~\cite{arjovsky2019invariant}, VREx~\cite{krueger2021out}, DANN~\cite{ganin2016domain}, and Coral~\cite{sun2016deep}, and the 4 graph-specific OOD algorithms are DIR~\cite{wu2022dir}, GSAT~\cite{miao2022interpretable}, CIGA~\cite{chen2022ciga}, and GIL~\cite{li2022learning}. It is worth noting that the implementation of CIGA we use is CIGAv2 and we provide an environment exploitation comparison with GIL's environment exploitation phase (IGA~\cite{koyama2020invariance}) on the same setting subgraph discovery network in Appx~\ref{app:environment-comparisons}. Detailed baseline selection justification can be found in Appx.~\ref{app:baselines}.


\vspace{-0.1cm}
\subsection{Sanity check on synthetic datasets}\label{sec:sanity_check}
\vspace{-0.1cm}

In this section, we aim to answer RQ1 by comparing our LECI against several baselines, on both structure shift and feature shift datasets. Following the GOOD benchmark~\cite{gui2022good}, we consider the synthetic dataset GOOD-Motif for a structure shift sanity check and the semi-synthetic dataset GOOD-CMNIST for a feature shift sanity check. Dataset details are available in Appx.~\ref{app:datasets}.

Specifically, each graph in GOOD-Motif is composed of a base subgraph and a motif subgraph. Notably, only the motif part, selected from 3 different shapes, determines its corresponding 3-class classification label. This dataset has two splits, a base split and a size split. For the base split, environment labels control the shapes of base subgraphs, and we target generalizing to unseen base subgraphs in the test set. In terms of the size split, base subgraphs' size scales vary across different environments. In such a split, we aim to generalize from small to large graphs.

As shown in Tab.~\ref{tab:synthetic_datasets}, our method performs consistently better against all baselines. According to the GOOD benchmark, our method is the only algorithm that performs close to the oracle result ($92.09\%$) on the base split. To further investigate the OOD performance of our method under the three SCM assumptions, we create another synthetic dataset, namely Motif for covariate, FIIF, and PIIF (CFP-Motif). To be specific, there are two major differences compared to GOOD-Motif. First, instead of using paths as base subgraphs in the test environment, we produce Dorogovtsev-Mendes graphs~\cite{dorogovtsev2002evolution} as base subgraphs, which can further evaluate the applicability of generalization results. Second, CFP-Motif extends GOOD-Motif with FIIF and PIIF shifts. In FIIF and PIIF splits (Fig.~\ref{fig:scms}), $G_C$ and $Y$ have a probability of 0.9 to determine the size of $G_S$, leading to spurious correlations w.r.t. size. Among the three shifts, PIIF is the hardest one for LECI due to the stronger correlations between $G_S$ and $Y$ than FIIF, which may cause the relaxed version of independence property (Sec.~\ref{sec:two-independecies}) less prominent, \ie, larger $I(G_S;Y)$ may lead to smaller gradients from any $G_p$ to $G_S$. Comparing ERM and subgraph discovery OOD methods on GOOD-Motif size and CFP-Motif FIIF/PIIF splits, we observe that although subgraph discovery methods are possible to address size shifts, they are more sensitive than general GNNs.


The feature shift sanity check is performed on GOOD-CMNIST, in which each graph consists of a colored hand-written digit transformed from MNIST via superpixel technique~\cite{monti2017geometric}. The environment labels control the digit colors, and the challenge is to generalize to unknown colors in the test set. As reported in Tab.~\ref{tab:synthetic_datasets}, LECI outperforms all baselines by a large margin, indicating the significant improvement of our method on the feature shift problem.

Due to the difficulty of OOD training process, many OOD methods do not achieve their theoretical OOD generalization limits. To further investigate the ability of different learning strategies, we deliberately leak the OOD test set results and apply OOD test set hyperparameter selection to compare the theoretical potentials of different OOD principles in Appx.~\ref{app:test-leak-results}.

\begin{table}[!tp]\centering
\caption{\textbf{Results on structure and feature shift datasets.} The reported results are the classification accuracies on test sets with standard deviations in parentheses. All reported results are obtained through an automatic hyperparameter selection process with 3 runs. The best and second-best results are highlighted in \textbf{bold} and \underline{underline} respectively.}\label{tab:synthetic_datasets}
\scriptsize
\resizebox{0.8\columnwidth}{!}{
\begin{tabular}{lccccccc}\toprule
&\multicolumn{2}{c}{GOOD-Motif} &GOOD-CMNIST &\multicolumn{3}{c}{CFP-Motif} \\\cmidrule{2-7}
&basis &size &color &covariate &FIIF &PIIF \\\midrule
ERM &60.93(11.11) &\underline{56.63(7.12)} &26.64(2.37) &57.56(9.59) &37.22(3.70) &62.45(9.21) \\
IRM &\underline{64.94(4.85)} &54.52(3.27) &29.63(2.06) &58.11(5.14) &44.33(1.52) &\underline{68.34(10.40)} \\
VREx &61.59(6.58) &55.85(9.42) &27.13(2.90) &48.78(7.81) &34.78(1.34) &63.33(6.55) \\
Coral &61.95(10.36) &55.80(4.05) &29.21(6.87) &57.11(8.35) &42.67(7.09) &60.33(8.85) \\
DANN &50.62(4.71) &46.61(3.78) &27.86(5.02) &49.45(8.05) &43.22(6.64) &62.56(10.39) \\
\midrule
ASAP &45.00(11.66) &42.23(4.20) &23.53(0.67) &60.00(2.36) &43.34(7.41) &35.78(0.88) \\
DIR &34.39(2.02) &43.11(2.78) &22.53(2.56) &44.67(0.00) &42.00(6.77) &47.22(8.79) \\
GSAT &62.27(8.79) &50.03(5.71) &\underline{35.02(2.78)} &\underline{68.22(7.23)} &\underline{51.56(6.59)} &61.22(8.80) \\
CIGA &37.81(2.42) &51.87 (5.15) &25.06(3.07) &56.78(2.99) &39.11(7.70) &45.67(7.52) \\
\midrule
LECI &\textbf{84.56(2.22)} &\textbf{71.43(1.96)} &\textbf{51.80(2.70)} &\textbf{83.20(5.89)} &\textbf{77.73(3.85)} &\textbf{69.40(7.54)} \\
\bottomrule
\end{tabular}}
\end{table}

\subsubsection{Interpretability visualizations}

\begin{figure}[!t]
    \centering
    \resizebox{1\textwidth}{!}{\includegraphics{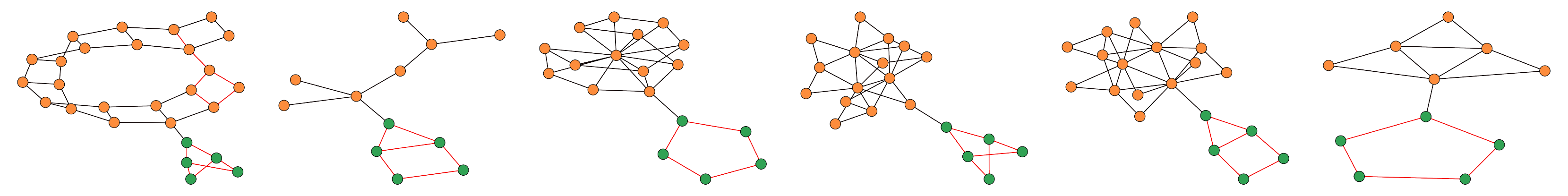}}
    \caption{\textbf{Interpretability visualization of LECI.} The figure displays the results with the three motifs (crane, house, cycle). The left three graphs are selected from the training set with 3 base graphs, namely, ladder, tree, and wheel, respectively. The right three graphs are from the OOD test set with Dorogovtsev-Mendes graphs as their basis. For clarity, the motifs and base graphs are colored as green and orange nodes, respectively. The selected causal graphs $\hat{G}_C$ are denoted by red edges.}
    \label{fig:interpretable_visualization}
\end{figure}

As illustrated in Fig.~\ref{fig:interpretable_visualization}, LECI can select the motifs accurately, which indicates that LECI eliminates most spurious subgraphs to make predictions. This is the key reason behind LECI's ability to generalize to graphs with different unknown base subgraphs, making it the first method that achieves such invariant predictions on GOOD-Motif. More visualization results are available in the Appx.~\ref{app:interpretability-visualization}.

\vspace{-0.1cm}
\subsection{Practical comparisons}
\vspace{-0.1cm}

In this section, we aim to answer RQ2, RQ3, and RQ4 by conducting experiments on real-world scenarios.

\subsubsection{Comparisons on real-world datasets}

\begin{table*}[!t]\centering
\caption{\textbf{Real-world scenario performance.} We compare the performance of 10 methods on real-world datasets. The results are reported in terms of accuracy for the first two datasets, and ROC-AUC for the latter two. For each dataset split, ID val and OOD val denote the OOD test set results using the in-distribution and out-of-distribution validation set in each run, repectively.}\label{tab:real-world datasets}
\scalebox{0.80}{
\scriptsize
\begin{tabular}{lcccccccccc}\toprule
&\multicolumn{2}{c}{GOOD-SST2} &\multicolumn{2}{c}{GOOD-Twitter} &\multicolumn{2}{c}{GOOD-HIV-scaffold} &\multicolumn{2}{c}{GOOD-HIV-size} &\multicolumn{2}{c}{DrugOOD-assay} \\\cmidrule{2-11}
&ID val &OOD val &ID val &OOD val &ID val &OOD val &ID val &OOD val &ID val &OOD val \\\midrule
ERM &78.37(2.64) &80.41 (0.69) &54.93(0.96) &57.04(1.70) &69.61(1.32) &70.37(1.19) &\underline{61.66(2.45)} &57.31(1.06) &70.03(0.16) &72.18(0.18) \\
IRM &79.73(1.45) &80.17(1.52) &55.27(1.19) &\underline{57.72(1.03)} &\underline{73.35(2.30)} &70.89(0.29) &58.52(0.86) &60.86(2.78) &\underline{71.56(0.32)} &\underline{72.69(0.29)} \\
VREx &79.31(1.40) &80.33(1.09) &56.46(0.93) &56.37(0.76) &71.73(3.51) &71.18(0.69) &58.39(1.54) &60.10(2.09) &70.22(0.86) &72.32(0.58) \\
Coral &78.24(3.26) &80.97(1.07) &56.57(0.42) &56.14(1.76) &71.19(2.82) &71.12(2.92) &60.81(4.76) &62.07(1.05) &70.18(0.76) &72.07(0.56) \\
DANN &78.74(0.82) &80.36 (0.61) &55.52(1.27) &55.71(1.23) &69.88(3.66) &\underline{72.25(1.59)} &61.37(0.53) &60.04(2.11) &69.83(0.95) &72.23(0.26) \\
\midrule
ASAP &78.51(2.26) &80.44(0.59) &56.10(2.65) &56.37(1.30) &69.97(2.91) &68.44(0.49) &61.08(2.66) &61.54(2.53) &68.02(1.22) &71.73(0.39) \\
DIR &77.65(0.71) &\underline{81.50(0.55)} &55.32(1.85) &56.81(0.91) &65.84(1.71) &68.59(3.70) &59.69(1.59) &60.85(0.52) &67.29(0.73) &69.70(0.65) \\
GSAT &79.25(1.09) &80.46 (0.38) &55.09(0.66) &56.07(0.53) &71.55(3.58) &71.39(1.41) &60.92(1.00) &60.61(1.19) &71.01(0.54) &72.26(0.45) \\
CIGA &\underline{80.37(1.46)} &81.20(0.75) &\underline{57.51(1.36)} &57.19(1.15) &66.25(2.89) &71.47(1.29) &58.24(3.78) &\underline{62.56(1.76)} &67.68(1.14) &70.54(0.59) \\
\midrule
LECI & \textbf{82.93(0.22)} &\textbf{83.44(0.27)} &\textbf{59.35(1.44)} &\textbf{59.64(0.15)} &\textbf{74.04(0.65)} &\textbf{74.43(1.69)} &\textbf{64.83(2.59)} &\textbf{65.44(1.78)} &\textbf{72.67(0.46)} &\textbf{73.45(0.17)} \\
\bottomrule
\end{tabular}
}
\end{table*}

We cover a diverse set of real-world datasets. For molecular property prediction tasks, we use the scaffold and size splits of GOOD-HIV~\cite{gui2022good} and the assay split of DrugOOD LBAP-core-ic50~\cite{ji2022drugood} to evaluate our method's performance on different shifts. Additionally, we compare OOD methods on natural language processing datasets, including GOOD-SST2 and Twitter~\cite{yuan2020explainability}. The Twitter dataset is split similarly to GOOD-SST2, thus it will be denoted as GOOD-Twitter in this paper. According to Tab.~\ref{tab:real-world datasets}, LECI achieves the best results over all baselines on real-world datasets. Notably, LECI achieves consistently effective performance regardless of whether the validation set is from the in-distribution (ID) or out-of-distribution (OOD) domain. This indicates the stability of our training process, which will be further discussed in Sec.~\ref{sec:training-stability-strudy}. Besides, we provide a fairness justification in Appx.~\ref{app:fairness-discussion}.


\setlength\intextsep{0pt}

\vspace{-0.1cm}
\subsubsection{Hyperparameter sensitivity study}
\vspace{-0.1cm}

\begin{wrapfigure}[20]{R}{0.5\textwidth}
    \centering
    \resizebox{1\columnwidth}{!}{\includegraphics{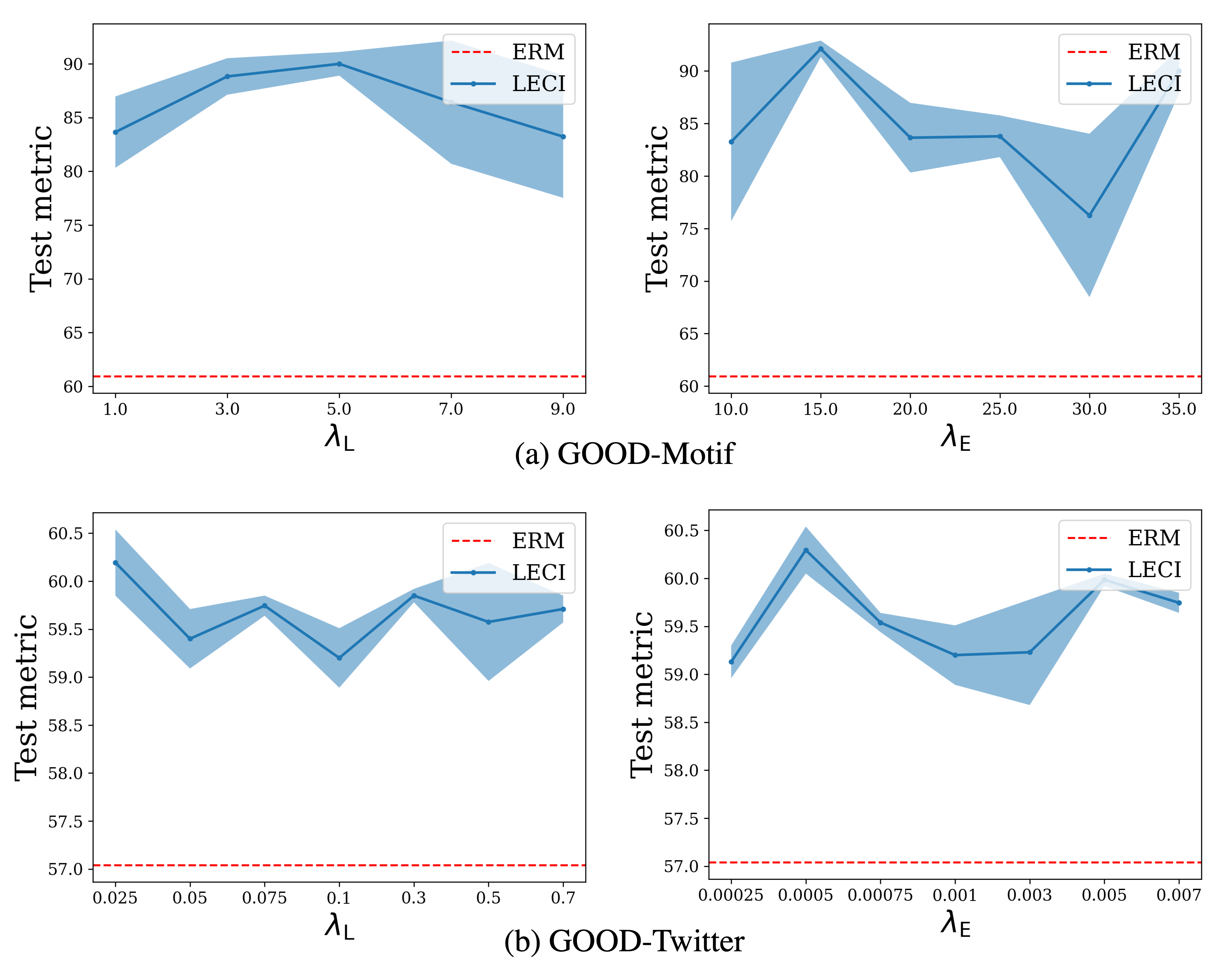}}
    \caption{\textbf{Sensitivity analysis of $\lambda_\text{L}$ and $\lambda_\text{E}$.} For each sensitivity study, we fix other hyperparameters with the values selected from the previous experiments.
    }
    \label{fig:sensitive study}
\end{wrapfigure}

In this study, we examine the effect of the hyperparameters $\lambda_\text{E}$ and $\lambda_\text{L}$ on the performance of our proposed method on the GOOD-Motif and GOOD-Twitter datasets. We vary these hyperparameters around their selected values and observe the corresponding results. As shown in Fig.~\ref{fig:sensitive study}, LECI demonstrates robustness across different hyperparameter settings and consistently outperforms ERM. Notebly, since OOD generalizations are harsher than ID tasks, although these results are slightly less stable than ID results, they can be considered as robust in the OOD field compared to previous OOD methods~\cite{wu2022dir, chen2022ciga}. In these experiments, while we apply stronger independence constraints in synthetic datasets, we use weaker constraints in real-world datasets. This is because the two discriminators are harder to train on real-world datasets, so we slow down the adversarial training of the subgraph selector to maintain the performance of the discriminators. More information on the hyperparameters can be found in the Appx.~\ref{app:hyperparameters}.

\vspace{-0.1cm}
\subsubsection{Training stability and dynamics study}\label{sec:training-stability-strudy}
\vspace{-0.1cm}

\begin{wrapfigure}[14]{R}{0.5\textwidth}
    \centering
    \resizebox{1\columnwidth}{!}{\includegraphics{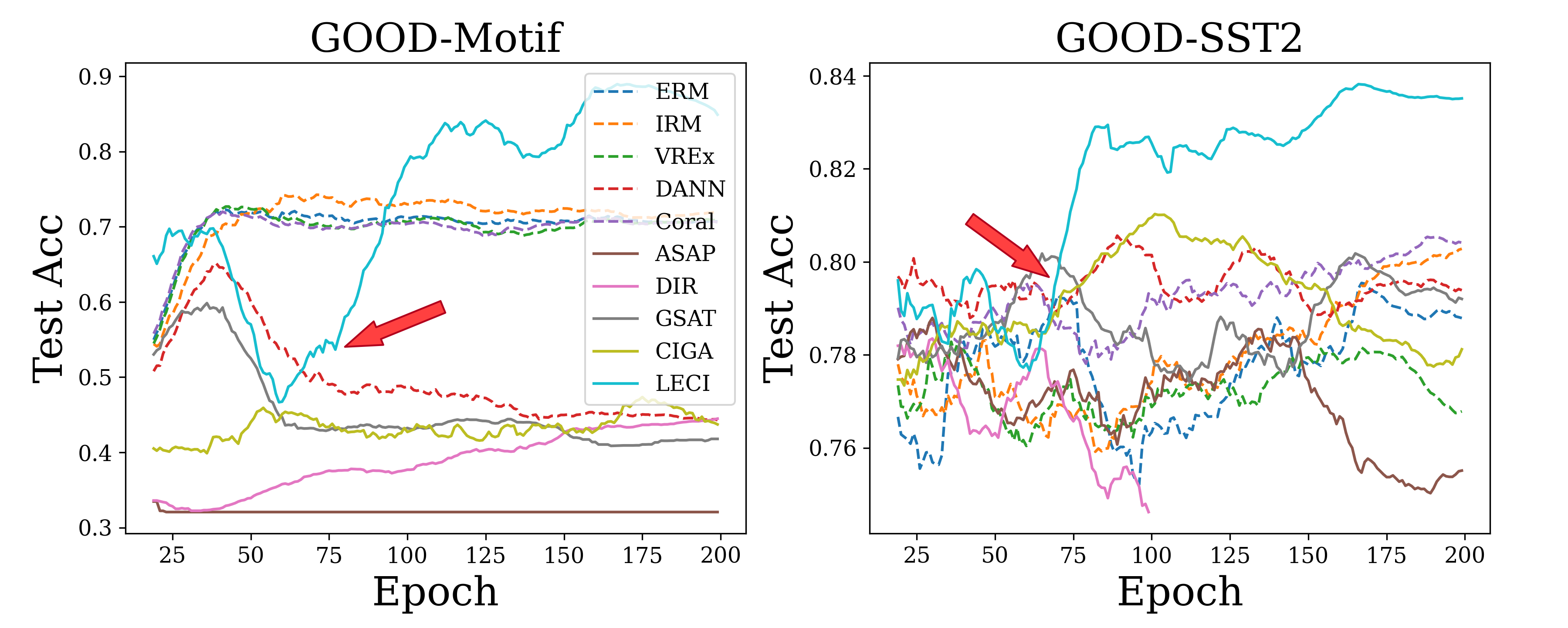}}
    \caption{\textbf{Stability study.} The figures illustrate the OOD test accuracy curves during the training process on GOOD-Motif and GOOD-SST2. The big red arrows indicate the times when the independence constraints begin to take effect.}
    \label{fig:stability study}
\end{wrapfigure}

The training stability is an important consideration to measure whether a method is practical to implement in real-world scenarios or not. To study LECI's training stability, we plot the OOD test accuracy curves for our LECI and the baselines on GOOD-Motif and GOOD-SST2 during the training processes. As illustrated in Fig.~\ref{fig:stability study}, many baselines achieve their highest performances at relatively early epochs but eventually degenerate to worse results after overfitting to the spurious information. On GOOD-Motif, several general OOD baselines converge to accuracy around $70\%$ higher than many other baselines, but these results are sub-optimal since $70\%$ accuracy indicates that these methods nearly misclassify one whole class given the dataset only has 3 classes. In contrast, LECI's OOD test accuracy begins to climb up rapidly once the discriminators are well-trained and the independence constraints take effect. Then, LECI consistently converges to the top-grade performance without further degradation. This indicates that LECI has a stable training process with desired training dynamics; thus, it is practical to implement in real-world scenarios.

Dive into the training dynamics, during the initial phase of the training process depicted in Fig.~\ref{fig:stability study}, independence constraints are minimally applied (or are not applied), ensuring a conducive environment for discriminator training. This approach is predicated on the notion that adversarial gradients yield significance only post the successful training of discriminators according to Proposition~\ref{proposition:2}. The observed performance "drop" is attributed to the fact that the generalization optimization is yet to be activated, so it indicates the general subgraph discovery network performance.

\vspace{-0.1cm}
\subsection{Ablation study}\label{sec:ablation-study}
\vspace{-0.1cm}

\begin{wraptable}[14]{R}{0.5\textwidth}
    \centering
    \caption{\textbf{Ablation study on LECI.} 
    The "None" and "Full" rows represent the results for the basic interpretable network and full LECI model
    N/A denotes that a certain component is not applied to the dataset.
    }\label{tab:ablation study}
    \small
    \resizebox{0.95\columnwidth}{!}{\begin{tabular}{lccc}\toprule
    &\multicolumn{2}{c}{GOOD-Motif} &GOOD-CMNIST \\\cmidrule{2-4}
    &basis &size &color  \\\midrule
    None &58.38(9.52) &65.17(6.48) &33.41(4.63) \\
    LA &62.14(9.37) &53.57(6.89) &33.64(4.41) \\
    EA &64.02(21.30) &38.69(1.86) &38.29(9.85) \\
    PFSC &N/A &N/A &19.33(5.88) \\
    Full &\textbf{84.56(2.22)} &\textbf{71.43(1.96)} &\textbf{51.80(2.70)} \\
    \bottomrule
    \end{tabular}}
\end{wraptable}

We empirically demonstrate the effectiveness of LECI through the above experiments. In this section, we further answer RQ5 to investigate the components of LECI. Specifically, we study the effect of environment adversarial (EA) training, label adversarial (LA) training, and pure feature shift consideration (PFSC) by attaching one of them to the basic interpretable subgraph discovery network. As shown in Tab.~\ref{tab:ablation study}, it is clear that applying only partial independence obtains suboptimal performance. It may even lead to worse results as indicated by Motif-size. In comparison, much higher performance for the structure shift datasets can be achieved when both EA and LA are applied simultaneously. This further highlights the importance of addressing the subgraph discovery challenges discussed in Sec.~\ref{sec:subgraph-discovery-challenges} to invariant predictions.

Overall, the experiments in this section demonstrate that LECI is a practical and effective method for handling out-of-distribution generalization in graph data. It outperforms existing baselines on both synthetic and real-world datasets and is robust to hyperparameter settings. Additionally, LECI's interpretable architecture and training stability further highlight its potential for real-world applications.

\vspace{-0.05cm}
\section{Conclusions \& Discussions}
\vspace{-0.05cm}

We propose a technical and practical solution to incorporate two causal independence properties to release the potential of environment information for causal subgraph discovery in graph OOD generalization. The previous graph OOD works commonly assume the non-existence of the environment information, thus enabling these algorithms to work on more datasets without environment labels. However, the elimination of the environment information generally brings additional assumptions that may be more strict or even impossible to satisfy as mentioned in Sec.~\ref{sec:relations-with-prior-methods}. In contrast, environment labels are widely used in the computer vision field~\cite{ganin2016domain}. While in graph learning areas, many labels can be accessed by applying simple groupings or deterministic algorithms as shown in GOOD~\cite{gui2022good} and DrugOOD~\cite{ji2022drugood}. These labels might not be accurate enough, but the experiments in Sec.~\ref{sec:experiments} have proved their effectiveness over previous assumptions empirically. Moreover, a recent graph environment-aware non-Euclidean extrapolation work G-Splice~\cite{li2023graph} also validates the significance of environment labels from the augmentation aspect.
Another avenue, graph environment inference~\cite{li2022learning, yang2022learning}, has been explored deeper recently by GALA~\cite{chen2023does} which is conducive to LECI and future environment-based methods. We hope this work can shed light on the future direction of graph environment-centered methods.

\vspace{-0.05cm}
\section*{Acknowledgements}
\vspace{-0.05cm}

This work was supported in part by National Science Foundation grants IIS-2006861 and IIS-1908220.


\bibliography{LECI}
\bibliographystyle{unsrtnat}


\clearpage

\appendix

\makeatletter
\vskip 0.1in
\vbox{%
\hsize\textwidth
\linewidth\hsize
\vskip 0.1in
\@toptitlebar
\centering
{\LARGE\bf Appendix of LECI\par}
\@bottomtitlebar
\vskip 0.3in \@minus 0.1in
}
\makeatother


\etocdepthtag.toc{mtappendix}
\etocsettagdepth{mtchapter}{none}
\etocsettagdepth{mtappendix}{subsection}
\tableofcontents

\clearpage

\FloatBarrier  

\setlength\intextsep{\oldintextsep}

\section{Broader Impacts}

Out-of-distribution (OOD) generalization is a persistent challenge in real-world deployment scenarios, especially prevalent within the field of graph learning. This problem is heightened by the high costs and occasional infeasibility of conducting numerous scientific experiments. Specifically, in many real-world situations, data collection is limited to certain domains, yet there is a pressing need to generalize these findings to broader domains where executing experiments is challenging. By approaching the OOD generalization problem through a lens of causality, we open a pathway for integrating underlying physical mechanisms into Graph Neural Networks (GNNs). This approach harbors significant potential for wide-ranging social and scientific benefits.

Our work upholds ethical conduct and does not give rise to any ethical concerns. It neither involves human subjects nor introduces potential negative social impacts or issues related to privacy and fairness. We have not identified any potential for malicious or unintended uses of our research. Nevertheless, we recognize that all technological advancements carry inherent risks. Hence, we advocate for continual evaluation of the broader impacts of our methodology across diverse contexts.

\section{Related works}\label{app:related-works}

\subsection{Extensive background discussions}

\textbf{Out-of-Distribution (OOD) Generalization.} Out-of-Distribution (OOD) Generalization~\cite{shen2021towards, duchi2021learning, shen2020stable, liu2021heterogeneous} addresses the challenge of adapting a model, trained on one distribution (source), to effectively process data from a potentially different distribution (target). It shares strong ties with various areas such as transfer learning~\cite{weiss2016survey, torrey2010transfer, zhuang2020comprehensive}, domain adaptation~\cite{wang2018deep}, domain generalization~\cite{wang2022generalizing}, causality~\cite{pearl2009causality, peters2017elements}, invariant learning~\cite{arjovsky2019invariant}, and feature learning~\cite{chen2023understanding}. As a form of transfer learning, OOD generalization is especially challenging when the target distribution substantially differs from the source distribution. OOD generalization, also known as distribution or dataset shift~\cite{quinonero2008dataset, MORENOTORRES2012521}, encapsulates several concepts including covariate shift~\cite{shimodaira2000improving}, concept shift~\cite{widmer1996learning}, and prior shift~\cite{quinonero2008dataset}. Both Domain Adaptation (DA) and Domain Generalization (DG) can be viewed as specific instances of OOD, each with its own unique assumptions and challenges.

\textbf{Domain Adaptation (DA).} In DA scenarios~\cite{pan2010domain, patel2015visual, wilson2020survey, fang2022source, liu2021data}, both labeled source samples and target domain samples are accessible. Depending on the availability of labeled target samples, DA can be categorized into supervised, semi-supervised, and unsupervised settings. Notably, unsupervised domain adaptation methods~\cite{long2015learning, sun2016deep, kang2019contrastive, ganin2015unsupervised, tsai2018learning, ajakan2014domain, ganin2016domain, tzeng2015simultaneous, tzeng2017adversarial, chen2022self, liu2022source, yang2021transformer, yu2022source, ishii2021source, liu2021adapting, fan2022unsupervised, eastwood2021source} have gained popularity as they only necessitate unlabeled target samples. Nonetheless, the requirement for pre-collected target domain samples is a significant drawback, as it may be impractical due to data privacy concerns or the necessity to retrain after collecting target samples in real-world scenarios.

\textbf{Domain Generalization (DG).} DG~\cite{wang2022generalizing, li2017deeper, muandet2013domain, deshmukh2019generalization} strives to predict samples from unseen domains without the need for pre-collected target samples, making it more practical than DA in many circumstances. However, generalizing without additional information is logically implausible, a conclusion also supported by the principles of causality~\cite{pearl2009causality, peters2017elements} (Appx.~\ref{app:environment-significance}). As a result, contemporary DG methods have proposed the use of domain partitions~\cite{ganin2016domain, zhang2022nico++} to generate models that are domain-invariant. Yet, due to the ambiguous definition of domain partitions, many DG methods lack robust theoretical underpinning.

\textbf{Causality \& Invariant Learning.} Causality~\cite{peters2016causal, pearl2009causality, peters2017elements} and invariant learning~\cite{arjovsky2019invariant, rosenfeld2020risks, ahuja2021invariance, chen2023pareto} provide a theoretical foundation for the above concepts, offering a framework to model various distribution shift scenarios as structural causal models (SCMs). SCMs, which bear resemblance to Bayesian networks~\cite{heckerman1998tutorial}, are underpinned by the assumption of independent causal mechanisms, a fundamental premise in causality. Intuitively, this supposition holds that causal correlations in SCMs are stable, independent mechanisms akin to unchanging physical laws, rendering these causal mechanisms generalizable. An assumption of a data-generating SCM equates to the presumption that data samples are generated through these universal mechanisms. Hence, constructing a model with generalization ability requires the model to approximate these invariant causal mechanisms. Given such a model, its performance is ensured when data obeys the underlying data generation assumption.

\citet{peters2016causal} initially proposed optimal predictors invariant across all environments (or interventions). Motivated by this work, \citet{arjovsky2019invariant} proposed framing this invariant prediction concept as an optimization process, considering one of the most popular data generation assumptions, PIIF. Consequently, numerous subsequent works~\cite{rosenfeld2020risks, ahuja2021invariance, chen2022does, lu2021invariant}—referred to as invariant learning—considered the initial intervention-based environments~\cite{peters2016causal} as an environment variable in SCMs. When these environment variables are viewed as domain indicators, it becomes evident that this SCM also provides theoretical support for DG, thereby aligning many invariant works with the DG setting. Besides PIIF, many works have considered FIIF and anti-causal assumptions~\cite{rosenfeld2020risks, ahuja2021invariance, chen2022does}, which makes these assumptions popular basics of causal theoretical analyses. When we reconsider the definition of dataset shifts, we can also define the covariate data generation assumption, as illustrated in Fig.~\ref{fig:scms}, where the covariate assumption is one of the most typical assumptions in DG~\cite{wang2022generalizing}.

\textbf{Graph OOD Generalization.} Extrapolating on non-Euclidean data has garnered increased attention, leading to a variety of applications~\cite{sanchez2018graph, barrett2018measuring, saxton2019analysing, battaglia2016interaction, tang2020towards, velivckovic2019neural, xu2019can, chen2022understanding, fu2022lattice, liu2021fast, zhang2023artificial}. Inspired by \citet{xu2020neural}, \citet{yang2022graph} proposed that GNNs intrinsically possess superior generalization capability. Several prior works~\cite{knyazev2019understanding, yehudai2021local, bevilacqua2021size} explored graph generalization in terms of graph sizes, with \citet{bevilacqua2021size} being the first to study this issue using causal models. Recently, causality modeling-based methods have been proposed for both graph-level tasks~\cite{wu2022dir, miao2022interpretable, chen2022ciga, fan2022debiasing, yang2022learning} and node-level tasks~\cite{wu2022handling}. However, except for CIGA~\cite{chen2022ciga}, their data assumptions are less comprehensive compared to traditional OOD generalization. CIGA, while recognizing the importance of diverse data generation assumptions (SCMs), misses the significance of environment information and attempts to fill the gap through non-trivial extra assumptions, which we will discuss in Appx.~\ref{app:environment-significance} and \ref{app:assumption-comparisons}.

Additionally, environment inference methods have gained traction in graph tasks, including EERM~\cite{wu2022handling}, MRL~\cite{yang2022learning}, and GIL~\cite{li2022learning}. However, these methods face two undeniable challenges. First, their environment inference results require environment exploitation methods for evaluation, but there are no such methods that perform adequately on graph tasks according to the synthetic dataset results in GOOD benchmark~\cite{gui2022good}. Second, environment inference is essentially a process of injecting human assumptions to generate environment partitions, as we will explain in Appx.~\ref{app:environment-significance}, but these assumptions are not well compared. Hence, this paper can also be viewed as a work that aids in building better graph-specific environment exploitation methods for evaluating environment inference methods. Although we cannot directly compare with EERM and MRL due to distinct settings, we strive to modify GIL and compare its environment exploitation method (IGA~\cite{koyama2020invariance}) with ours under the same settings in Appx.~\ref{app:environment-comparisons}.

\textbf{Graph OOD Augmentation.} Except for representation learning, graph augmentation is also a promising way to boost the model's generalizability. While previous works focus on random or interpolation augmentations~\cite{rong2019dropedge, you2020graph, wang2021mixup}, \citet{li2023graph} recently explored environment-directed extrapolating graphs in both feature and non-Euclidean spaces, achieving significant results, which sets a solid foundation and renders a promising future data-centric direction, especially in the circumstance of the popularity of large language models.

\textbf{Relation between GNN Explainability/Interpretability and Graph OOD Generalization.} GNN explainability research~\cite{ying2019gnnexplainer, yuan2020explainability, pope2019explainability, baldassarre2019explainability, gui2022flowx} aims to explain GNNs from the input space, whereas GNN interpretability research~\cite{wu2022dir, miao2022interpretable, huang2022graphlime} seeks to build networks that are self-explanatory. Subgraph explanation~\cite{yuan2021explainability, yuan2020xgnn} is one of the most persuasive ways to explain GNNs. Consequently, subgraphs have increasingly become the most natural components to construct a graph data generation process, leading to subgraph-based graph data generation SCMs~\cite{chen2022ciga}, inspired by general OOD SCMs~\cite{ahuja2021invariance}.

\textbf{Comparisons to the previous graph OOD works.} It is worth mentioning that compared to previous works DIR~\cite{wu2022dir} and GSAT~\cite{miao2022interpretable}, we do not propose a new subgraph selection architecture, instead, we propose a learning strategy that can apply to any subgraph selection architectures. While DIR defines their invariance principle with a necessity guarantee, they still cannot tackle the two subgraph discovery challenges (Appx.~\ref{app:assumption-comparisons}) and only serve for the FIIF SCM. In contrast, we provide both sufficiency and necessity guarantees to solve the two subgraph discovery challenges under three data generation assumptions. We provide more detailed comparisons in terms of theory and assumptions in Appx.~\ref{app:assumption-comparisons}.

\subsection{Environment information: significance and fairness discussion}\label{app:environment-significance}

In this section, we address three pivotal queries: \textbf{Q1:} Is it feasible to infer causal subgraphs $G_C$ devoid of environment information (partitions)? \textbf{Q2:} How do environment partitions generated by environment inference methods~\cite{creager2021environment, wu2022handling, yang2022learning, li2022learning} compare to those chosen by humans? \textbf{Q3:} Are the experimental comparisons in this paper fair?

\textbf{Q1.} As depicted in Fig.~\ref{fig:scms}, $G_S$ intrinsically correlates with multiple variables, namely, $G_S$, $Y$, and $G$. Similarly, $G_C$ has correlations with $G_S$, $Y$, and $G$. Absent an environment variable $E$, $G_C$, and $G_S$ can interchange without any repercussions, rendering them indistinguishable. Prior graph OOD methods have therefore necessitated supplementary assumptions to aid the identification of $G_C$ and $G_S$, such as the latent separability $H(G_C|Y)\le H(G_S|Y)$ outlined by \citet{chen2022ciga}. However, both $H(G_C|Y)\le H(G_S|Y)$ and its converse $H(G_C|Y)\ge H(G_S|Y)$~\cite{fan2022debiasing} can be easily breached in real-world scenarios. Thus, without environment partitions, achieving OOD generalization is impracticable without additional assumptions. 

Upon viewing environment partitions as a variable (Fig.~\ref{fig:scms}), the environment variable effectively acts as an instrumental variable~\cite{pearl2009causality} for approximating the exclusive causation between $G_S$ and $G$. Consequently, we can discern the causal mechanism $G_S\rightarrow G$ through $E$, thereby distinguishing between $G_C$ and $G_S$. We discuss our proposed method and analysis, grounded in intuitive statistics and information theory, in our main paper's method section~\ref{sec:method}. 

\textbf{Q2.} ZIN~\cite{lin2022zin}, a recent work, aptly addresses this question. \citet{lin2022zin} demonstrate that without predefined environment partitions, there can be multiple environment inference outcomes, with different outcomes implying unique causal correlations. This implies that absent prearranged environment partitions, causal correlations become unidentifiable, leading to conflicting environment inference solutions. Thus, guaranteeing environment inference theoretically is possible only via the use of other environment partition equivalent information or through the adoption of plausible assumptions. Therefore, these two factors, extra information, and assumptions, are intrinsically linked, as they serve as strategies for integrating human bias into data and models.

\textbf{Q3.} Building upon the last two question discussions, for all methods~\cite{miao2022interpretable, chen2022ciga, li2022learning, wu2022handling, yang2022learning} that offer theoretical guarantees, they utilize different forms of human bias, namely, environment partitions or assumptions. Whilst numerous graph OOD methods compare their assumptions, our work demonstrates that simply annotated environment partitions outperform all current sophisticated human-made assumptions, countering \citet{yang2022learning}'s belief that environment information on graphs is noisy and difficult to utilize. Therefore, the comparisons with ASAP~\cite{ranjan2020asap}, DIR~\cite{wu2022dir}, GSAT~\cite{miao2022interpretable}, and CIGA~\cite{chen2022ciga} underscore this point. Additionally, through comparisons to conventional environment exploitation methods (IRM~\cite{arjovsky2019invariant}, VREx~\cite{krueger2021out}, etc.) in Sec.~\ref{sec:experiments} and the comparison to the environment exploitation phase of GIL~\cite{li2022learning} (\ie, IGA~\cite{koyama2020invariance}) provided in Appx.~\ref{app:environment-comparisons}, we empirically validate that LECI currently surpasses all other environment exploitation methods in graph tasks. Therefore, our experiments justly validate these claims without engaging any fairness issues.

\subsection{Comparisons to previous environment-based graph OOD methods}\label{app:environment-comparisons}

In this section, we detail comparisons with prior environment-based graph OOD methods from both qualitative and quantitative perspectives. Firstly, we establish that we propose an innovative graph-specific environment exploitation approach. Secondly, we focus on comparing only the environment exploitation phase of previous work, discussing the methods chosen for comparison and the rationale.

\subsubsection{Claim justifications: the innovative graph-specific environment exploitation method}

\textbf{Motivations.} The motivation for our work stems from the fact that general environment exploitation methods have been found wanting in the context of graph-based cases, leading to our development of a graph-specific environment exploitation method. We demonstrate significant improvement with the introduction of this method. Subsequently, we justify LECI as an innovative graph-specific environment exploitation technique.

\textbf{Justifications.} An environment-based method usually comprises two phases: environment inference and environment exploitation. Environment inference aims to predict environment labels, while environment exploitation leverages these labels once they are available. The relationship between these two phases is not reciprocal; an environment inference method may employ an environment exploitation method to assess its performance, but the converse is not true.

Recent environment inference methods at the graph level~\cite{yang2022learning, li2022learning} and even the node level~\cite{wu2022handling} do not incorporate graph-specific environment exploitation techniques. Our method, LECI, differs in this respect, as shown in the table below:

\begin{table}[!htp]\centering
\caption{Comparison of Characteristics to Previous Environment-Based Graph OOD Methods.}\label{tab: }
\begin{tabular}{lrrr}\toprule
\textbf{Method} &\textbf{Environment Inference} &\textbf{Environment Exploit} \\\midrule
MRL~\cite{yang2022learning} &Graph-Specific & Not Graph-Specific \\
GIL~\cite{li2022learning} &Graph-Specific & Not Graph-Specific \\
EERM~\cite{wu2022handling} (Node-Level) &Graph-Specific & Not Graph-Specific \\
Ours (LECI) & N/A & Graph-Specific \\
\bottomrule
\end{tabular}
\end{table}

The criterion to determine if an environment exploitation method is graph-specific is whether the environment partition assistant is directly applied to graph-specific components (adjacency matrix: $A_C$, $A_S$) rather than common hidden representations.

This assessment can also be based on a replacement test: does the environment exploitation optimization remain valid after replacing $G$ and GNN with feature vectors $X$ and MLP?

As shown in Eq.~\ref{eq:env-justify}, when we replace Equation (7) in MRL~\cite{yang2022learning}, Equation (8) in GIL~\cite{li2022learning}, and Equation (5) in EERM~\cite{wu2022handling} with the following optimization objectives:

\begin{subequations}\label{eq:env-justify}
    \begin{align}
        \mbox{MRL:  } &\frac{1}{|\mathcal{X}|} \sum_{(X, y) \in \mathcal{X}}\left|\log q_\theta(y \mid X)-\mathbb{E}_{p(\mathbf{e} \mid \mathbf{X})}\left[\log p_\tau(y \mid X, e)\right]\right| \nonumber \\
        &+\beta \mathbb{E}_{\mathbf{e}}\left[\frac{1}{\left|\mathcal{X}^e\right|} \sum_{(X, y) \in \mathcal{X}^e}\left[-\log q_\theta(y \mid X)\right]\right] \label{eq:env-justify1} \\
        \mbox{GIL:  } & \mathbb{E}_{e \in \operatorname{supp}\left(\mathcal{E}_{i n f e r}\right)} \mathcal{R}^e(f(\mathrm{X}), \mathrm{Y} ; \theta)+\lambda \operatorname{trace}\left(\operatorname{Var} \mathcal{E}_{i n f e r}\left(\nabla_\theta \mathcal{R}^e\right)\right) \label{eq:env-justify2}\\
        \mbox{EERM:  }& \min _\theta \operatorname{Var}\left(\left\{L\left(X^k, Y ; \theta\right): 1 \leq k \leq K\right\}\right)+\frac{\beta}{K} \sum_{k=1}^K L\left(X^k, Y ; \theta\right) \label{eq:env-justify3}
    \end{align}
\end{subequations}

where $\theta$, $\tau$ are MLP parameters~\ref{eq:env-justify1}; $f(\cdot)$ is MLP~\ref{eq:env-justify2}; in Eq.~\ref{eq:env-justify3}, $g_{w_k^*}(G)$ is replaced with $X^k$ (samples from environment $k$) since $g_{w_k^*}$ belongs to the environment inference phase. These three optimizations remain valid, thereby suggesting environment exploitation methods in \cite{yang2022learning, li2022learning, wu2022handling} are not graph-specific. Note that this does not negate the graph-specific nature of their environment inference phases and network designs. For example, while GIL employs subgraph discovery networks as a basic generalization architecture, its invariant learning loss does not leverage this specialized architecture, as shown in Eq.~\ref{eq:env-justify2}.

In contrast, substituting $G$ with $X$ in our environment optimization leads to problems. For instance, terms such as $\hat{X}_C$, and operations like $\hat{X}_C\subseteq X$ and $\hat{X}_S=X - \hat{X}_C$ would need redefinition; also, the mapping from $(X,A)_C$ to $X_C$ is not unique. Hence, this optimization is inherently graph-specific.

Therefore, LECI emerges as an innovative graph-specific environment exploitation method that offers clear differentiation from previous environment-based graph OOD methods. It could be considered appropriate to claim that LECI is the first of its kind in the realm of graph-specific environment exploitation.

\subsubsection{Environment exploitation quantitative comparisons}\label{app:environment-exploitation-quantitative-comparisons}

In this section, we compare to the environment-based graph OOD baseline GIL~\cite{li2022learning}, since its original implementation is on a subgraph discovery network similar to the subgraph discovery network~\cite{miao2022interpretable} we adopt. Since MRL~\cite{yang2022learning} has a design specific for molecules without using subgraph discovery networks, and EERM~\cite{wu2022handling} is tackling node-level tasks, their settings are too different to be fairly compared with our method.

To conduct a fair comparison, we replace GIL's subgraph discovery architecture with GSAT~\cite{miao2022interpretable} to ensure LECI and GIL are measured under the same network structure. Since GIL applies IGA~\cite{koyama2020invariance} as its invariant learning regularizer (environment exploitation method), we adopt the official IGA implementation from Domainbed~\cite{ganin2016domain}. The hyperparameter sweeping space is strictly aligned with the original GIL paper, \ie, $\{10^{-1}, 10^{-3}, 10^{-5}\}$. After 3 runs with different random seeds, we have the following OOD validated test results on Motif-base, Motif-size, CMNIST, GOOD-SST2, GOOD-Twitter, GOOD-HIV-scaffold, GOOD-HIV-size, and DrugOOD-assay.

\begin{table}[!htp]\centering
\caption{\textbf{Environment exploitation phase comparisons: Sanity checks}}\label{tab:environment-exploitation1}
\begin{tabular}{lcccc}\toprule
Sanity checks &Motif-base &Motif-size & CMNIST \\\midrule
GIL (IGA) &55.99(3.62) &54.59(4.99) &38.39(3.38) \\
LECI &\textbf{84.56(2.22)} &\textbf{71.43(1.96)} &\textbf{51.80(2.70)} \\
\bottomrule
\end{tabular}
\end{table}

\begin{table}[!htp]\centering
\caption{\textbf{Environment exploitation phase comparisons: Real-world}}\label{tab:environment-exploitation2}
\resizebox{1\textwidth}{!}{\begin{tabular}{lccccc}\toprule
Real-world &GOOD-SST2 &GOOD-Twitter &GOOD-HIV-scaffold &GOOD-HIV-size &DrugOOD-assay \\\midrule
GIL (IGA) &75.04(5.24) &\textbf{60.01(0.47)} &71.27(1.69) &62.27(0.91) &73.20(0.12) \\
LECI &\textbf{83.44(0.27)} &59.64(0.15) &\textbf{74.43(1.69)} &\textbf{65.44(1.78)} &\textbf{73.45(0.17)}\\
\bottomrule
\end{tabular}}
\end{table}

The results indicate that except on Twitter, GIL~\cite{li2022learning} produces sub-optimal results compared to LECI.

\section{Theory and discussions}\label{app:theory}

In this section, we will first provide theoretical proofs of the relaxed independence property, Proposition~\ref{proposition:2}, Lemma~\ref{lemma:pre-ea}, Lemma~\ref{lemma:EA}, Lemma~\ref{lemma:LA}, Theorem~\ref{thm:guarantee-covariate}, and Theorem~\ref{thm:guarantee-universal}. Then, we provide specific challenge-solving, assumption comparisons for previous graph OOD methods. Subsequently, we will discuss the subgraph discovery challenges and the subgraph selector invariance. Finally, we will provide the current challenges, limitations, possible solutions, and future directions under the subgraph modeling data generation assumptions.

\subsection{Proofs}

\subsubsection{Relaxed independence property}\label{app:relaxed_independence_property}

\begin{proposition}
    For any $G_p$ that $G_S\subseteq G_p$, $I(G_S;Y)\le I(G_p;Y)$.
\end{proposition}
\begin{proof}
    $I(G_S;Y)\le I(G_p;Y)$ is equivalent to $H(Y|G_S) \ge H(Y|G_p)$ where $H$ denotes entropy, since $H(Y)$ is a constant. We consider a random graph $G_S$ consisting of a set of random variables (edges) $A_S$. $G_S \subseteq G_p$ means $G_p$'s set of random edges $A_p$ is a superset of $A_S$, \ie, $A_p = A_S \cup A_{+}$, where $A_{+}$ is the set of additional edges in $A_p$. Therefore, $H(Y|G_S)=H(Y|A_S)\ge H(Y|A_S,A_{+}) = H(Y|A_p)= H(Y|G_p)$, \ie, $I(G_S;Y)\le I(G_p;Y)$.
\end{proof}

\subsubsection{Proof of Proposition~\ref{proposition:2}}

\begin{proposition}
    Denoting KL-divergence as $KL[\cdot\| \cdot]$, for $\theta$ fixed, the optimal discriminator $\phi_E$ is $\phi_E^*$, s.t.
    \begin{equation}
        \text{KL}\left[P(E|\hat{G}_C) \| P_{\phi_\text{E}^*}(E|\hat{G}_C)\right]= 0.
    \end{equation}
\end{proposition}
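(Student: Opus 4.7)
The plan is to show that the negative log-likelihood objective from Proposition~\ref{proposition:1} is, up to an additive constant independent of $\phi_E$, exactly the KL divergence stated in the claim, so that the minimizer automatically drives that KL to zero. First, I would expand the expectation in $-\mathbb{E}\left[\log P_{\phi_E}(E|\hat{G}_C)\right]$ by conditioning on $\hat{G}_C$ and then on $E$ under the true joint distribution, i.e., write it as $\mathbb{E}_{\hat{G}_C}\bigl[\mathbb{E}_{E|\hat{G}_C}[-\log P_{\phi_E}(E|\hat{G}_C)]\bigr]$. The inner expectation is the cross-entropy between $P(E|\hat{G}_C)$ and $P_{\phi_E}(E|\hat{G}_C)$.

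Next, I would invoke the standard identity that cross-entropy decomposes as entropy plus KL, giving
\begin{equation}
-\mathbb{E}\left[\log P_{\phi_E}(E|\hat{G}_C)\right] = \mathbb{E}_{\hat{G}_C}\!\left[H\bigl(P(E|\hat{G}_C)\bigr)\right] + \mathbb{E}_{\hat{G}_C}\!\left[\mathrm{KL}\!\left[P(E|\hat{G}_C)\,\|\,P_{\phi_E}(E|\hat{G}_C)\right]\right].
\end{equation}
For $\theta$ fixed, the induced distribution of $\hat{G}_C$ is fixed, so the first term is a constant in $\phi_E$; only the second term depends on the discriminator parameters. Then I would apply Gibbs' inequality: the expected KL is non-negative and equals zero if and only if $P_{\phi_E}(E|\hat{G}_C) = P(E|\hat{G}_C)$ almost surely. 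Therefore the minimizer $\phi_E^*$ from Proposition~\ref{proposition:1} must achieve this equality, which is precisely the claim $\mathrm{KL}[P(E|\hat{G}_C)\|P_{\phi_E^*}(E|\hat{G}_C)]=0$.

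The main obstacle, and really the only subtle point, is the usual capacity assumption that is implicit in this style of argument (analogous to Proposition~1 of the GAN paper cited in the text): the discriminator family $\{P_{\phi_E}(\cdot|\hat{G}_C)\}$ must be rich enough to represent the true conditional $P(E|\hat{G}_C)$ for the zero-KL minimum to actually be attainable. I would state this non-parametric/universal-approximator assumption explicitly at the start of the proof, as is done in the GAN derivation, so that the infimum over $\phi_E$ coincides with the pointwise minimum of the KL functional. Once this is in place, the rest reduces to the cross-entropy decomposition and Gibbs' inequality, and no further calculation is needed.
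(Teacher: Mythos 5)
Your proof is correct and follows essentially the same route as the paper's: both rewrite the negative log-likelihood as a $\phi_E$-independent constant (your $\mathbb{E}_{\hat{G}_C}[H(P(E|\hat{G}_C))]=H(E|\hat{G}_C)=H(E)-I(E;\hat{G}_C)$ is exactly the paper's constant) plus the expected KL, then invoke non-negativity of KL to conclude the minimizer drives it to zero. Your explicit statement of the discriminator-capacity assumption needed for the zero-KL minimum to be attained is a point the paper leaves implicit, and is a welcome addition rather than a deviation.
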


\begin{proof}
Since given a fixed $\theta$, $I(E;\hat{G}_C)$ and $H(E)$ are both constant. Therefore, we have

\begin{equation}
    \begin{aligned}
    \phi_E^* = & \text{argmin}_{\phi_E} - \mathbb{E} \left[ \log{P_{\phi_\text{E}}(E|\hat{G}_C)} \right] \\
    =& \text{argmin}_{\phi_E} I(E;\hat{G}_C) - \mathbb{E} \left[ \log{P_{\phi_\text{E}}(E|\hat{G}_C)} \right] - H(E) \\
    =& \text{argmin}_{\phi_E} \text{KL}\left[P(E|\hat{G}_C) \| P_{\phi_\text{E}}(E|\hat{G}_C)\right].
    \end{aligned}
\end{equation}
\end{proof}

\subsubsection{Proof of Lemma~\ref{lemma:pre-ea}}

\begin{lemma}
Given a subgraph $G_p$ of input graph $G$ and SCMs (Fig.~\ref{fig:scms}), it follows that $G_p \subseteq G_C$ if and only if $G_p \indep E$.
\end{lemma}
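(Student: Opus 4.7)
The plan is to prove both directions by leveraging the structural causal models in Fig.~\ref{fig:scms} and the underlying data generation assumption that $E$ directly causes $G_S$ (so every non-trivial substructure of $G_S$ carries information about $E$), while $G_C$ is not caused by $E$ in any of the three SCMs. Since $G = G_C + G_S$ with $G_C$ and $G_S$ disjoint, any subgraph $G_p \subseteq G$ decomposes uniquely as $G_p = (G_p \cap G_C) + (G_p \cap G_S)$, which reduces the statement to analyzing how the $G_S$-portion of $G_p$ interacts with $E$.

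For the forward direction ($G_p \subseteq G_C \Rightarrow G_p \indep E$), I would first verify by d-separation that $G_C \indep E$ in each of the three SCMs: in the covariate SCM, $G_C$ and $E$ share no causal ancestor or directed path; in the FIIF SCM, the only path $G_C \to G_S \leftarrow E$ is blocked at the collider $G_S$; in the PIIF SCM, the path $G_C \to Y \to G_S \leftarrow E$ is likewise blocked at the collider $G_S$. Since $G_p$ is a deterministic function of $G_C$ (it is just a subgraph of it), independence is preserved, giving $G_p \indep E$.

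For the reverse direction ($G_p \indep E \Rightarrow G_p \subseteq G_C$), I would argue by contrapositive: suppose $G_p \not\subseteq G_C$, so the decomposition above yields a non-empty subgraph $G_p \cap G_S$ of $G_S$. By the data generation assumption underlying the SCMs, $E \to G_S$ is a genuine causal mechanism under which distinct environments induce distinct distributions over every non-trivial substructure of $G_S$; hence $G_p \cap G_S \not\indep E$. Because $G_p \cap G_C \indep E$ (by the forward direction applied to $G_p \cap G_C$) and the $G_C$-part and $G_S$-part of $G_p$ are separately controlled by disjoint parents, the joint $G_p = (G_p \cap G_C) + (G_p \cap G_S)$ cannot be independent of $E$ either, contradicting $G_p \indep E$.

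The main obstacle is the reverse direction, and specifically the step that promotes ``some non-empty piece of $G_S$ sits inside $G_p$'' to ``$G_p \not\indep E$''. This is not a pure d-separation consequence: it requires the faithfulness-style premise, implicit in the paper's data generation assumptions, that the causal arrow $E \to G_S$ is informative on every substructure of $G_S$, not just on $G_S$ as a whole. I would state this premise explicitly at the start of the proof (in the same spirit as the footnote in Sec.~\ref{sec:two-independecies} where the authors rely on $G_S$ being associated with $E$), and then conclude the result cleanly via the contrapositive argument above.
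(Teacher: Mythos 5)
Your proof takes essentially the same route as the paper's: the forward direction reduces to $G_C \indep E$ (which the paper simply asserts and you verify by d-separation in each SCM) plus a data-processing step for subgraphs, and the reverse direction argues by contradiction that a non-empty piece of $G_S$ sitting inside $G_p$ would make $G_p$ dependent on $E$. The paper is terser --- it cites $I(E;G_C)\ge I(E;G_p)$ as ``trivial'' and asserts ``according to the three SCMs, $G_q$ is not independent of $E$'' without elaboration --- so your explicit d-separation check and your flagging of the faithfulness-style premise (that every non-trivial substructure of $G_S$ carries information about $E$) are clarifications of the same argument rather than a different one.
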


\begin{proof}
($\Rightarrow$)
Contradiction: If $G_p \subseteq G_C$, but $G_p$ is not independent to $E$, i.e., $I(E;G_p)>0$. Since $I(E;G_C) \ge I(E;G_p)$ (this is trivial because $G_p$ is a subset of $G_C$), we obtain $I(E;G_C) > 0$, which contradicts to the fact that $G_C \indep E$.

($\Leftarrow$) Contradiction: If $G_p \indep E$ but $G_p$ is not a subgraph of $G_C$; i.e., $\exists$ non-trivial (not empty) $G_q \subseteq G_p$, s.t. $G_q \subseteq G_S$. According to the three SCMs, $G_q$ is not independent of $E$; thus, $G_p$ is not independent of $E$, which contradicts the premise.

Since $\text{KL}\left[P(E|\hat{G}_C) \| P_{\phi_\text{E}}(E|\hat{G}_C)\right] \ge 0$, when $\phi_E=\phi_E^*$, $\text{KL}\left[P(E|\hat{G}_C) \| P_{\phi_\text{E}}(E|\hat{G}_C)\right]$ reaches its minimum 0. 
\end{proof}

\subsubsection{Proof of Lemma~\ref{lemma:EA}}

\begin{lemma}
Given SCMs (Fig.~\ref{fig:scms}), the predicted $\hat{G}_C=f_\theta(G)$, and the EA training criterion (Eq.~\ref{eq:EA}), $\hat{G}_C\subseteq G_C$ if and only if $\theta\in \Theta_\text{E}^*$.
\end{lemma}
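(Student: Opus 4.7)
The plan is to prove the biconditional by leveraging Lemma~\ref{lemma:pre-ea}, which already characterizes subgraphs of $G_C$ via independence from $E$, together with Propositions~\ref{proposition:1} and \ref{proposition:2}, which identify the inner maximum over $\phi_E$ in the EA criterion with the true mutual information $I(E;\hat{G}_C)$ (up to the constant $H(E)$). So the EA criterion is equivalent to minimizing $I(E;\hat{G}_C)$ over $\theta$, and $\Theta^*_\text{E}$ is exactly the $\theta$-argmin of this mutual information.

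First I would establish the key equivalence: for any fixed $\theta$, by Proposition~\ref{proposition:2} the optimal discriminator $\phi_E^*$ satisfies $\mathrm{KL}[P(E\mid\hat{G}_C)\,\|\,P_{\phi_E^*}(E\mid\hat{G}_C)]=0$, so
\[
I(E;\hat{G}_C)=\mathbb{E}\!\left[\log P_{\phi_E^*}(E\mid\hat{G}_C)\right]+H(E),
\]
and hence $\argmin_\theta \max_{\phi_E}\mathbb{E}[\log P_{\phi_E}(E\mid\hat{G}_C)]=\argmin_\theta I(E;\hat{G}_C)=\Theta^*_\text{E}$. This reduces the lemma to showing $\hat{G}_C\subseteq G_C \iff \theta$ minimizes $I(E;\hat{G}_C)$.

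Next I would prove the forward direction ($\Rightarrow$). Assume $\hat{G}_C\subseteq G_C$. By Lemma~\ref{lemma:pre-ea}, $\hat{G}_C\indep E$, so $I(E;\hat{G}_C)=0$. Since $I\ge 0$ universally, $\theta$ attains the global minimum of $I(E;\hat{G}_C)$, so $\theta\in\Theta^*_\text{E}$. For the reverse direction ($\Leftarrow$), assume $\theta\in\Theta^*_\text{E}$. I would first argue that the minimum value of $I(E;\hat{G}_C)$ is exactly $0$, by exhibiting at least one parameter setting (e.g., the selector that outputs $G_C$ itself, or even the trivial empty subgraph) for which $\hat{G}_C\subseteq G_C$, which by Lemma~\ref{lemma:pre-ea} gives $I(E;\hat{G}_C)=0$. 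Therefore any $\theta\in\Theta^*_\text{E}$ achieves $I(E;\hat{G}_C)=0$, i.e.\ $\hat{G}_C\indep E$, and applying Lemma~\ref{lemma:pre-ea} again yields $\hat{G}_C\subseteq G_C$.

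The main obstacle I expect is the realizability issue in the $(\Leftarrow)$ direction: to conclude that the EA minimum equals $0$, the parameterization $f_\theta$ must be expressive enough to produce some $\hat{G}_C\subseteq G_C$. I would handle this by invoking the assumption (implicit in the subgraph-selector construction in Section~\ref{sec:adversarial-implementation}) that $f_\theta$ can represent arbitrary edge-selection distributions, so in particular it can represent the identity selector or a selector that returns $G_C$; the empty selector alone suffices since an empty subgraph is trivially independent of $E$ and is a subgraph of $G_C$. A secondary subtlety is that Proposition~\ref{proposition:2}'s guarantee requires the discriminator family to be flexible enough to realize $P(E\mid\hat{G}_C)$, which is a standard universal-approximation assumption inherited from the GAN-style argument and is not unique to this lemma; I would state this assumption once up front and then the two directions follow cleanly from the machinery above.
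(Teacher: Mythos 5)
Your proof follows the same route as the paper's: use Propositions~\ref{proposition:1} and~\ref{proposition:2} to identify $\Theta^*_\text{E}$ with $\argmin_\theta I(E;\hat{G}_C)$, note the minimum is $0$, and then apply Lemma~\ref{lemma:pre-ea} to convert $I(E;\hat{G}_C)=0$ into $\hat{G}_C\subseteq G_C$. The one place you go beyond the paper's terse argument is in flagging and patching the realizability gap in the $(\Leftarrow)$ direction --- the paper simply asserts that $\theta\in\Theta^*_\text{E}$ implies $I(E;\hat{G}_C)=0$ without justifying that the minimum over the parameterized family is actually attained at $0$, whereas you correctly observe this needs a witness (the empty selector, or one returning $G_C$) and an expressiveness assumption on $f_\theta$, which is a genuine and worthwhile clarification.
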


\begin{proof}
$\theta\in \Theta_\text{E}^*$ is equivalent to $\theta = \argmin_{\theta} I(E;\hat{G}_C)$, which indicates $I(E;\hat{G}_C)=0$ when $\theta\in \Theta_\text{E}^*$. Since according to Lemma~\ref{lemma:pre-ea}, we have $\hat{G}_C \subseteq G_C$ if and only if $\hat{G}_C \indep E$, while $I(E;\hat{G}_C)=0$ if and only if $\hat{G}_C \indep E$, it follows that $\hat{G}_C\subseteq G_C$ if and only if $\theta\in \Theta_\text{E}^*$.
\end{proof}

\subsubsection{Proof of Theorem~\ref{thm:guarantee-covariate}}

\begin{theorem}
Under the covariate SCM assumption and both EA and LA training criteria (Eq.~\ref{eq:EA} and \ref{eq:LA}), it follows that $\hat{G}_C = G_C$ if and only if $\theta \in \Theta_\text{E}^* \cap \Theta_\text{L}^* $.
\end{theorem}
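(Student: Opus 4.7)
The plan is to leverage the two preceding lemmas (Lemma~\ref{lemma:EA} and Lemma~\ref{lemma:LA}) almost directly, using the fact that the predicted causal and spurious subgraphs form a partition of $G$ that mirrors the true partition $G = G_C + G_S$ in the covariate SCM. The whole argument should be short and essentially combinatorial, as all the heavy lifting has already been done in establishing the two independence lemmas.

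First I would argue the reverse direction ($\Leftarrow$): suppose $\theta \in \Theta_\text{E}^* \cap \Theta_\text{L}^*$. Lemma~\ref{lemma:EA} gives $\hat{G}_C \subseteq G_C$, and Lemma~\ref{lemma:LA} gives $\hat{G}_S \subseteq G_S$. Because the subgraph selector produces complementary outputs with $\hat{G}_C + \hat{G}_S = G = G_C + G_S$ and the decomposition into $G_C$ and $G_S$ is disjoint, any edge lying in $G_C \setminus \hat{G}_C$ would have to appear in $\hat{G}_S$ but also belong to $G_C$, contradicting $\hat{G}_S \subseteq G_S$. Symmetrically, no edge of $G_S$ can be missing from $\hat{G}_S$. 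Hence $\hat{G}_C = G_C$ and $\hat{G}_S = G_S$.

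For the forward direction ($\Rightarrow$), assume $\hat{G}_C = G_C$. By complementarity this also forces $\hat{G}_S = G - \hat{G}_C = G - G_C = G_S$, so in particular the trivial inclusions $\hat{G}_C \subseteq G_C$ and $\hat{G}_S \subseteq G_S$ hold. Applying the ``only if'' halves of Lemma~\ref{lemma:EA} and Lemma~\ref{lemma:LA} respectively yields $\theta \in \Theta_\text{E}^*$ and $\theta \in \Theta_\text{L}^*$, hence $\theta \in \Theta_\text{E}^* \cap \Theta_\text{L}^*$.

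I do not anticipate a genuine obstacle here: the main subtlety is simply being careful that the subgraph union/subtraction operations $+$ and $-$ defined in Sec.~\ref{sec:causal-graphs} act as a disjoint partition on edges, so that $\hat{G}_C \subseteq G_C$ and $\hat{G}_S \subseteq G_S$ together with $\hat{G}_C + \hat{G}_S = G_C + G_S$ force equality. All causal/independence content is absorbed into the two lemmas, and the covariate assumption is needed only through Lemma~\ref{lemma:LA} (since $Y \indep G_S$ may fail under FIIF/PIIF, which is exactly why Theorem~\ref{thm:guarantee-universal} is stated separately with a relaxed LA criterion).
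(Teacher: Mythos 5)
Your proof is correct and follows essentially the same route as the paper's: both directions reduce to Lemma~\ref{lemma:EA} and Lemma~\ref{lemma:LA}, with complementarity of $\hat{G}_C$ and $\hat{G}_S$ converting $\hat{G}_S \subseteq G_S$ into $G_C \subseteq \hat{G}_C$ so that the two inclusions force equality. The paper states this a bit more compactly (handling both iff directions at once rather than splitting into $\Rightarrow$ and $\Leftarrow$), but the underlying argument is identical.
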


\begin{proof}
Under the covariate SCM assumption, Lemma~\ref{lemma:LA} is satisfied. According to Lemma~\ref{lemma:EA} and \ref{lemma:LA}, EA and LA are both optimized ($\theta \in \Theta_\text{E}^* \cap \Theta_\text{L}^* $) if and only if $\hat{G}_C\subseteq G_C$ and $\hat{G}_S\subseteq G_S$. Since $\hat{G}_C = G - \hat{G}_S$ and $G_C = G - G_S$, $\hat{G}_S \subseteq G_S$ is equivalent to $G_C \subseteq \hat{G}_C$. These two inclusive relations $G_C \subseteq \hat{G}_C$ and $\hat{G}_C\subseteq G_C$ imply $\hat{G}_C = G_C$. Therefore, we obtain our conclusion that $\hat{G}_C = G_C$ if and only if EA and LA both reach their optimums.
\end{proof}

\subsubsection{Proof of Theorem~\ref{thm:guarantee-universal}}

\begin{theorem}
Given the covariate/FIIF/PIIF SCM assumptions and both EA and LA training criteria (Eq.~\ref{eq:EA} and \ref{eq:LA}), it follows that $G_C = \hat{G}_C$ if and only if, under the optimal EA training, the LA criterion is optimized, \ie,$\theta \in \argmax_{\theta \in \Theta^*_\text{E}} \left\{\min_{\phi_L} \mathcal{L}_\text{L} \right\}$.
\end{theorem}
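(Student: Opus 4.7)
The plan is to reduce the nested adversarial objective on the right-hand side to an information-theoretic quantity and then to apply the relaxed independence property from Section~\ref{sec:two-independecies} after using the EA criterion to pin the selected causal subgraph inside the true one.

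First, I would exploit Lemma~\ref{lemma:EA} to constrain the feasible set: for every $\theta\in\Theta^*_\text{E}$ we have $\hat{G}_C\subseteq G_C$, or equivalently $G_S\subseteq \hat{G}_S = G-\hat{G}_C$. This inclusion is exactly the hypothesis needed to invoke the relaxed label-independence property, and it holds under all three SCMs of Figure~\ref{fig:scms} because $G_C\indep E$ is implied by each of them (the path through $G_S$ being blocked by the collider $G_S$ in the FIIF/PIIF cases). So restricting the outer maximization to $\Theta^*_\text{E}$ is well defined and automatically enforces $G_S\subseteq \hat{G}_S$.

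Next, I would evaluate the inner minimization exactly as was done for the EA criterion. Applying the analogs of Propositions~\ref{proposition:1} and~\ref{proposition:2} to the label discriminator $g_\text{L}$ yields, at the optimal $\phi_L^*$, the identity
\begin{equation}
\min_{\phi_L} \mathcal{L}_\text{L} \;=\; -\mathbb{E}\left[\log P_{\phi_\text{L}^*}(Y\mid \hat{G}_S)\right] \;=\; H(Y) - I(Y;\hat{G}_S),
\end{equation}
so that $\operatorname{argmax}_{\theta\in\Theta^*_\text{E}}\{\min_{\phi_L}\mathcal{L}_\text{L}\}$ is the same as $\operatorname{argmin}_{\theta\in\Theta^*_\text{E}} I(Y;\hat{G}_S)$. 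Combined with the inclusion $G_S\subseteq\hat{G}_S$ obtained in the previous step, the relaxed property $I(G_S;Y)\le I(G_p;Y)$ for any $G_p\supseteq G_S$ tells me that $I(Y;\hat{G}_S)$ achieves its minimum over $\Theta^*_\text{E}$ precisely at $\hat{G}_S=G_S$, i.e.\ at $\hat{G}_C=G_C$. This gives the $(\Leftarrow)$ direction.

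For $(\Rightarrow)$, if $\hat{G}_C=G_C$ then $\hat{G}_C\indep E$ holds in each SCM, so by Lemma~\ref{lemma:EA} we have $\theta\in\Theta^*_\text{E}$; and the value $I(Y;\hat{G}_S)=I(Y;G_S)$ attained is, by the very same relaxed inequality, the minimum achievable over the feasible set, placing $\theta$ in the outer argmax. The main obstacle I anticipate is controlling the equality case of the relaxed property: the statement as given only asserts a lower bound $I(G_S;Y)\le I(G_p;Y)$, whereas identifying $\hat{G}_C=G_C$ as the unique (not merely a) minimizer requires that strict containment $G_S\subsetneq \hat{G}_S$ yield strict inequality $I(G_S;Y)<I(Y;\hat{G}_S)$. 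I would address this by arguing from the SCMs that any extra edges added to $G_S$ necessarily come from $G_C$, which carries non-redundant information about $Y$ (since $G_C$ is the sole causal determinant of $Y$); so any such extension strictly increases the mutual information with $Y$. With this strictness in place, the equivalence in the theorem follows immediately.
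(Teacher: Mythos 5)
Your proposal follows essentially the same route as the paper's proof: restrict to $\Theta^*_\text{E}$ via Lemma~\ref{lemma:EA} so that $G_S\subseteq\hat{G}_S$, reduce the inner minimization $\min_{\phi_L}\mathcal{L}_\text{L}$ to $H(Y)-I(Y;\hat{G}_S)$ (the analog of Propositions~\ref{proposition:1} and~\ref{proposition:2}), and invoke the relaxed property to identify the minimizer of $I(Y;\hat{G}_S)$ over $\Theta^*_\text{E}$ as $\hat{G}_S=G_S$. If anything you are more careful than the paper in one respect: the paper silently assumes the equality case of $I(G_S;Y)\le I(G_p;Y)$ occurs only at $G_p=G_S$, whereas you explicitly flag that strictness must be argued from the SCMs (any surplus in $\hat{G}_S$ necessarily comes from $G_C$, the sole causal determinant of $Y$, and hence strictly increases the mutual information), which is the right way to close that step.
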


\begin{proof}
According to Lemma~\ref{eq:EA}, we have $\hat{G}_C \indep E$ under the optimal EA training. Recall the relaxed property: for any $G_p$ that $G_S\subseteq G_p$, $I(G_S;Y)\le I(G_p;Y)$, which is valid under the three SCMs~\ref{fig:scms}. Under the premise of $\hat{G}_C \indep E$ that is equivalent to $G_S \subseteq \hat{G}_S$, we take the relaxed property into account, \ie, for $\hat{G}_S$ that $G_S\subseteq \hat{G}_S$, we have $I(G_S;Y)\le I(G_p;Y)$. Therefore, when we apply the LA training criterion under the optimal EA training, the mutual information $I(\hat{G}_S;Y)$ can be minimized to be $\hat{G}_S = G_S$, \ie, $\hat{G}_C = G_C$.
\end{proof}

\subsection{Theory and assumption comparisons with previous graph OOD methods}\label{app:assumption-comparisons}

\begin{table*}[htb]
    \centering
    \caption{\textbf{Comparison of related methods on addressing the two challenges.} We compare the problem-solving capabilities of our proposed method, LECI, with prior methods and the two components of LECI. In the table, check marks (\cmark) denote that the method is capable of addressing the challenge. Cross marks (\xmark) indicate that the method is unable to solve the problem, or the preconditions are nearly impossible to be satisfied. Exclamation marks (\dmark) imply that the method can alleviate the problem under restricted assumptions. }\label{tab:challenge_comparison}
    \scalebox{1}{\begin{tabular}{l|cccc|ccc}
        \toprule
        Method       & DIR    & GSAT   & CIGAv1 & CIGAv2 & $E\indep G_C$ & $Y\indep G_S$ & LECI \\
        \midrule 
        Precision 1 & \xmark & \xmark & \xmark & \dmark & \cmark & \xmark & \cmark \\
        Precision 2 & \xmark & \xmark & \xmark & \dmark & \xmark & \cmark & \cmark \\
        \bottomrule 
    \end{tabular}}
    \vspace{-0.2cm}
\end{table*}

Many recent graph OOD learning strategies~\cite{wu2022dir, miao2022interpretable, chen2022ciga} adopt interpretable subgraph selection architectures, but our method is distinct from them in terms of the data generation assumptions, motivated challenges, and invariant learning strategy. To further investigate the key reason behind the LECI's OOD generalization ability, in Tab.~\ref{tab:challenge_comparison}, we provide a clear comparison with recent OOD generalization methods \emph{w.r.t.} the subgraph discovery challenges introduced in Sec.~\ref{sec:subgraph-discovery-challenges}. Specifically, DIR~\cite{wu2022dir} proposes to learn causal subgraphs with adaptive interventions. However, its architecture requires the pre-selected size information of the true causal subgraphs. It proposes to solve subgraph discovery challenges under the FIIF assumption, but it fails to provide a sufficient condition of its theoretical guarantees. GSAT~\cite{miao2022interpretable} proposes to use information constraints to select causal subgraphs under only the FIIF assumption. Its invariant guarantee relies on the anti-causal correlation $Y\rightarrow G_C$, \ie, given $Y$, $G_C$ should be unique, which does not hold in the general scenario where each class can have multiple modes. CIGA~\cite{chen2022ciga} is the first graph OOD method that can work on both FIIF and PIIF assumptions. However, CIGAv1 admits the limitation of the requirement of the size of true causal subgraphs, which is not practical even with human resources. CIGAv2 tries to achieve invariant predictions without the requirement of the true size information, but it, instead, requires an implicit assumption of the independence between the selected spurious subgraphs and the non-selected spurious subgraphs (denoted as $\widehat{G}^p_s$ and $\widehat{G}^l_s$ in CIGA), according to the equation (34) in its appendix. This assumption, named as independent components assumption, though weaker than the requirement of CIGAv1, still limits its applications, because $E$, as the confounder of all elements in $G_S$, can destroy this independence from a finer modeling perspective. Additionally, except for the data generation assumptions (FIIF and PIIF), it requires an invariant feature assumption $H(G_C|Y)\le H(G_S|Y)$. However, both $H(G_C|Y)\le H(G_S|Y)$ and its converse $H(G_C|Y)\ge H(G_S|Y)$~\cite{fan2022debiasing} can be easily breached in real-world scenarios. Note that, for simplicity, under the subgraph-based data generation assumptions, we slightly abuse $G_C$ and $G_S$ as the latent causal and non-causal factors $C$ and $S$.

Learning from the limitations of previous works, LECI is an innovative and effective method that addresses the problem of OOD generalization in graphs by exploiting environment information and learning causal independence. Specifically, LECI addresses the two subgraph discovery challenges by introducing two independence properties as $E\indep G_C$ and $Y\indep G_S$. By simultaneously addressing these challenges, LECI is able to alleviate the two precision problems under the three data generation assumptions. This makes LECI a unique and promising approach for OOD generalization in graph data.

\textbf{Assumption comparisons.} As we have compared previous works on subgraph discovery ability, to avoid possible confusion, we further provide a clear comparison in Tab.~\ref{tab:assumption-comparisons} with respect to data generation assumptions (SCMs), theory, and theoretical guarantees. We would like to argue that simple annotated/grouped environment partitions are strong enough to outperform many sophisticated human-designed assumptions.

\begin{table}[htb]
    \centering
    \resizebox{1\textwidth}{!}{
    \begin{tabular}{lccccc}
    \toprule
       Methods  &  SCMs & Theory & Guarantees & Extra information (assumptions) \\
       \midrule
       DIR  & FIIF & $Y\indep G_S | G_C$ & Necessity & True subgraph size assumption (TSSA) \\
       GSAT & FIIF & Information bottleneck & Sufficiency \& Necessity & Reversible predictor assumption \\
       CIGAv1 & FIIF \& PIIF & Intra-class similarity & Sufficiency \& Necessity & TSSA \\
       CIGAv2 & FIIF \& PIIF & Intra-class similarity & Sufficiency \& Necessity & $H(G_C|Y)\le H(G_S|Y)$ \& independent components assumption \\
       LECI & FIIF \& PIIF & Two independence \& relaxed property & Sufficiency \& Necessity & Environment partitions \\
    \bottomrule
    \end{tabular}}
    \caption{\textbf{Assumption comparisons.}}
    \label{tab:assumption-comparisons}
\end{table}

\subsection{Two precision challenges discussion}\label{app:discussions}

Under the subgraph-based graph modeling, it is obvious that the two precision challenges should be solved simultaneously to achieve invariant GNNs; thus, we argue that these two challenges should be analyzed and compared explicitly. While many previous works rely on unilateral causal analysis without discussing and comparing assumptions explicitly, we strive to compare them in the last subsection.

In this section, we would like to share a failure case to discuss the reasons behind the difficulty of solving the two subgraph discovery challenges. To be more specific, the difficulty may hide in the design of subgraph discovery architecture, \eg, maximizing mutual information $I(Y;\hat{G}_C)$ can be much less effective than expected. Generally, it is reasonable to believe that the invariant predictor $g_\text{inv}$ in Fig.~\ref{fig:architecture} is able to solve the second precision challenge by maximizing the mutual information between the selected subgraph and the label, \ie, $I(Y;\hat{G}_C)$. This is because maximizing $I(Y;\hat{G}_C)$ is approximately equivalent to minimizing $I(Y;\hat{G}_S)$, \ie, enforcing $\hat{G}_S$ to be independent of $Y$. This belief, however, does not hold empirically according to the ablation study in Sec.~\ref{sec:ablation-study}, where the subgraph discovery network with only the solution of the first precision problem (EA) fails to generalize. Intuitively, the subgraph selector $f_\theta$ is differentiable, but due to its discrete sampling nature, there are many undefined continual states, \eg, the middle states of the existence of an edge. These undefined states will hinder the effectiveness of gradients propagated to the subgraph selector. As a result, the invariant predictor can hardly implicitly access the label-correlated information from the unselected part $\hat{G}_S$ through gradients. This implies that training the invariant predictor alone is not sufficient to address the second precision challenge without support from the $\hat{G}_S$ side. Therefore, to explain LECI's success intuitively, we can consider $g_\text{inv}$ and $g_\text{L}$ as the two information probes from both $\hat{G}_C$ and $\hat{G}_S$ sides so that we exploit information from both sides to fully support maximizing $I(Y;\hat{G}_C)$.

From the perspective of the LECI optimization, each subgraph discovery challenge has verifiable and unverifiable optimization targets. The unverifiable descriptions include (1) \textit{$\hat{G}_S$ fails to cover $G_S$}; (2) \textit{$\hat{G}_C$ fails to cover $G_C$}. These unverifiable descriptions cannot be formed as optimization objects because it is impossible to verify whether one can cover all information unless one enumerates the whole search space. Therefore, we adopt the verifiable objectives (1) \textit{$\hat{G}_C$ contains parts of $G_S$} and (2) \textit{$\hat{G}_S$ contains parts of $G_C$}.

\subsection{How can the subgraph selector keep invariant under distribution shifts?}

We discuss this question from two aspects: (1) Can subgraph selector $f_\theta$ recognize $G_C$ with unseen $G_S$? (2) Will subgraph selector $f_\theta$ be misled to recognize parts of unseen $G_S$ as $G_C$?

This discussion is under the following assumptions and conditions:
\begin{itemize}
    \item \textbf{Causal subgraph equal support assumption (CSES):} Both $P^{tr}(G_C)$ and $P^{te}(G_C)$ have the same support, where $P(G_C)>0$.
    \item \textbf{Architecture:} Assume $f_\theta$ is a $K$-layer message passing neural network that only captures $K$-hop subgraph information. Theoretically, we have an optimal injective $K$-hop subgraph classifier that can distinguish all $K$-hop subgraphs. This implies only the same $K$-hop subgraphs can produce the same result by this classifier, while similar subgraphs cannot.
    \item \textbf{Information sufficiency:} Assume $K$ be certain value that $K$-hop subgraphs are large enough to cover $G_C$.
\end{itemize}

(1) 
In essence, $f_\theta$ operates as a classifier. When provided with an edge, it classifies the associated ego-graph of the edge into either label 1 (when the central edge is part of $G_C$ ) or label 0 (otherwise). We denote the distribution of $K$-hop ego-graphs on the edges of $G_C$ as $P(G_C^K)$, and the $K$-hop ego-graph space as $\mathcal{G}^K$. Therefore, we have $f_\theta: \mathcal{G}^K \mapsto\{0,1\}$, where $\forall G_p^K \in \mathcal{G}^K, f_\theta(G_p^K)=1$ when and only when $P(G_C^K=G_p^K)>0$, i.e., the distribution of $f_\theta$ is only affected by the support of $P(G_C^K)$. The CSES assumption can be relaxed to that $P^{t r}(G_C^K)$ and $P^{t e}(G_C^K)$ have the same support. This ensures that $G_C$ with unseen $G_S$ can still be recognized. Note that when $K$ is smaller, the assumption is looser. In contrast, when $K \rightarrow+\infty$, this assumption is equivalent to that $P^{t r}(G)$ and $P^{t e}(G)$ have the same support, which is stringent. Therefore, $K$ is the "locality rate" for us to loosen a global equal support requirement to a local one.

(2) To answer the second question, we will prove the following proposition:
\begin{proposition}
    There is not a $K$-hop subgraph that shares across $P(G_C)$ and $P^{te}(G_S)$.
\end{proposition}
\begin{proof}

    Contradiction: 
    
    Given the aforementioned assumptions, suppose there is a $K$-hop subgraph that shares across $P(G_C)$ and $P^{te}(G_S)$ as described in your example. This indicates that $\exists G^K_p \subseteq P^{tr}(G^K_C)$ such that there is an edge $e_s$ in $P^{te}(G_S)$ whose corresponding $K$-hop subgraph $G^K_{e_s}$(centered with this edge) equals to $G^K_p$, i.e., $G^K_{e_s}=G^K_p$. Since $G^K_p \subseteq P^{tr}(G^K_C)$, the central edge $p$ is an edge on $G_C$, and $G_C\subseteq G^K_p$ that is implied by the first condition above. Since $G^K_{e_s}=G^K_p$, the edge $e_s$ is an edge of $G_C$, and $G_C\subseteq G^K_{e_s}$. Therefore, since our basic assumption indicates an edge can either belong to $G_C$ or $G_S$, and $e_s$ belongs to $G_C$, $e_s$ cannot belong to $G_S$ even though $G_S\in P^{te}(G_S)$, which contradicts the supposition. Therefore, there is no such $K$-hop subgraph that shares across $P(G_C)$ and $P^{te}(G_S)$.
\end{proof}

In conclusion, under these three assumptions, it can be argued that $P(G_C | G)$ remains invariant when such $f_\theta$ is employed.

\subsection{Challenges and Limitations}

This section discusses the potential challenges, limitations of our methodology, and future research directions that can be pursued to further refine our approach.

A significant constraint in our method arises from the utilization of common adversarial training. This technique often requires longer training times due to the need for discriminator training. The attainment of optimal discriminators becomes particularly challenging when subgraph distributions continue to shift as a result of the subgraph selection training. A potential area for future research could be the exploration of alternative ways to enforce independence properties, for instance, approximations of the min-max optimization could be employed.

Another potential drawback of our method pertains to the subgraph discovery architecture~\cite{miao2022interpretable} that we use. As previously discussed in Section~\ref{app:discussions}, separating causal from non-causal subgraphs complicates the training process. In response, additional information from both subgraph branches must be provided to facilitate comprehensive optimizations. Consequently, future research could involve the development of new, promising subgraph discovery architectures that alleviate these challenges.

Our method might not directly correlate with OOD errors. For context, current statistical graph modeling techniques, such as \cite{bevilacqua2021size, Maskey2022, Chu2023}, can relate methods to OOD generalization errors, albeit focusing on size shift analysis. Methods inspired by explanations, like DIR~\cite{wu2022dir} and our proposed approach, offer practical solutions capable of addressing a wider range of shifts. Nonetheless, bridging these methods to OOD generalization errors presents a substantial challenge.

The limitations and future directions discussed herein have been outlined within the confines of the current three data generation assumptions. Moving forward, future studies could consider a wider range of data generation assumptions. Viewing from the lens of graph explainability, these assumptions could not only be at the subgraph-level but could also explore edge-level, node-level, or even flow-level~\cite{gui2022flowx} considerations. Such explorations would be instrumental in contributing to the continuous evolution and refinement of graph OOD models.

\section{LECI implementation details}

\subsection{The architecture of subgraph selector}\label{app:subgraph-selector}

\begin{figure}[!t]
    \centering
    \resizebox{1\textwidth}{!}{\includegraphics{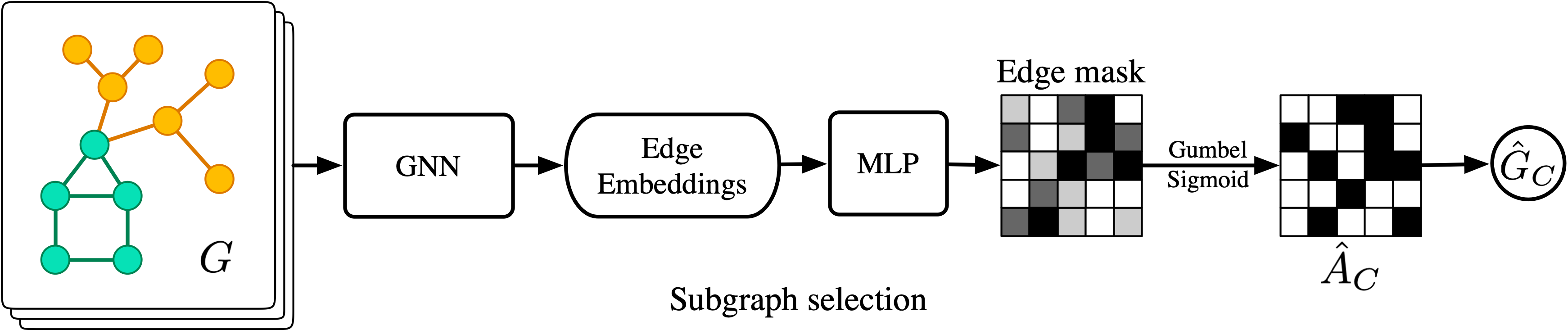}}
    \caption{\textbf{The architecture of subgraph selector $f_\theta$.}}
    \label{fig:subgraph selector}
\end{figure}

The architecture of our subgraph selector, as illustrated in Fig.~\ref{fig:subgraph selector}, is based on the work of~\citet{miao2022interpretable} and uses edges to select subgraphs. The input graph $G$ is passed through a general GNN to obtain the hidden representations of its nodes. These representations are then transformed into edge embeddings, where each edge embedding is the concatenation of the corresponding node representations. To obtain the edge probability mask, we use a multi-layer perceptron (MLP) and a sigmoid function to estimate the sampling probability of each edge. With these sampling probabilities, we obtain the sampled adjacency matrix $\hat{A}_C$, where we apply the Gumbel-sigmoid~\cite{jang2016categorical} technique to make the network differentiable. Finally, the subgraph selector outputs the selected subgraph $\hat{G}_C$ composed of $\hat{A}_C$ and the corresponding remaining nodes. Practically, the $\hat{G}_C$ composes all nodes but edge weights are assigned as $\hat{A}_C$, which is the typical edge representation of subgraphs. Note that to avoid trapping into local minimums caused by artifacts in synthetic datasets, we apply temperature annealing from 10 to 0.1 for Gumbel-sigmoid.

\subsection{The architecture of the pure feature shift consideration}\label{app:PFSC}

\begin{figure}[!t]
    \centering
    \scalebox{0.4}{\includegraphics{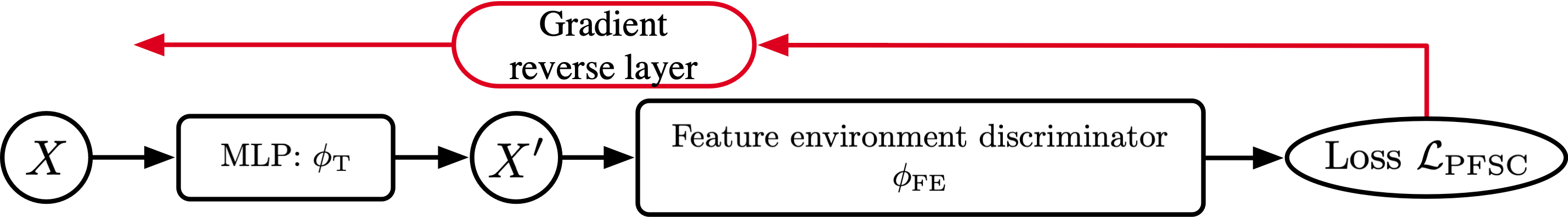}}
    \caption{\textbf{The architecture of the pure feature shift consideration (PFSC).} The figure illustrates the overall network of the PFSC. The reverse gradient multiplication hyperparameter is denoted as $\lambda_\text{PFSC}$. The loss $\mathcal{L}_\text{PFSC}$ here is the negative log-likelihood in Eq.~\ref{eq:PFSC}.}
    \label{fig:PFSC}
\end{figure}


The technique of pure feature shift consideration (PFSC) is used to remove environment information from the node feature $X$. As illustrated in Fig.~\ref{fig:PFSC}, a small MLP transformer, denoted as $\text{MLP}_\text{T}$, with learnable parameters $\phi_\text{T}$ is used to transform the node features $X$ into environment-free node features $X'$. This transformation is achieved through an adversarial training process, where the MLP is trained to remove environment information from $X$, and the feature environment discriminator aims to approximate the correlation between $X'$ and the environment $E$. The objective of this process can be written as Eq.~\ref{eq:PFSC}. After this transformation, the input graph $G$ is composed of $A$ and $X'$, \ie, $G=(X', A)$. It is worth noting that the use of PFSC alone is not sufficient to achieve optimal performance, as shown in the results of the ablation study in Tab.~\ref{tab:ablation study}.

\begin{equation}\label{eq:PFSC}
    \text{PFSC}: \max_{\phi_\text{T}} \min_{\phi_\text{FE}} - \mathbb{E}\left[\log{P_{\phi_\text{FE}}}(E|X')\right],
\end{equation}

\subsection{LECI's training strategy}\label{app:leci-training-strategy}


The training strategy of LECI encompasses two implementations. The first one adheres to a rigorous training process detailed in Sec.~\ref{sec:adversarial-implementation}. In this approach, updates to the subgraph selector are performed only upon the convergence of the two discriminators. This iterative training procedure involving the subgraph selector and the two discriminators continues throughout the training process. Despite its thoroughness, this approach can be somewhat time-consuming.

To overcome this hurdle, we resort to the second implementation. Here, we optimize the entire network, including the subgraph selector and the discriminators. The adversarial training hyperparameters, $\lambda_\text{L}$ and $\lambda_\text{E}$, are initially assigned a value of zero. This strategy restricts the influence of the gradients from the discriminators on the subgraph selector, thereby ensuring a relatively stable distribution of $\hat{G}_C$ and consistent training of the discriminators.

As the training proceeds and network parameters stabilize over time, with the discriminators moving closer to convergence, we progressively increase the adversarial training hyperparameters. This strategy allows for a gradual implementation of the independence constraints, ensuring the discriminators remain near their optimal states without being abruptly disrupted, as depicted in Fig.~\ref{fig:stability study}.

\section{Detailed experiment settings}

In this section, we provide experimental details including fairness discussions, datasets, baseline selection justifications, training and optimization settings, hyperparameter settings, and software/hardware environments.

\subsection{Fairness discussion}\label{app:fairness-discussion}

In all experiments in this paper, all reported results are obtained from 3 runs and are selected by automatic hyperparameter sweeping with respect to ID and OOD validation results. We also provide a fairness discussion about the usage of environment information in Appx.~\ref{app:environment-significance}.

\subsection{Datasets}\label{app:datasets}

For our experimental section, we utilize the SST2, Motif, and CMNIST datasets from the GOOD benchmark~\cite{gui2022good}, along with the LBAP-core-ic50 assay split from DrugOOD~\cite{ji2022drugood}. Furthermore, we create the GOOD-Twitter and CFP-Motif datasets. All dataset splits target domain generalization and aligns with the guidelines provided in the original literature except CFP-Motif which incorporates PIIF and FIIF shifts as described in Sec.~\ref{sec:sanity_check}. Although the GOOD datasets are primarily designed for covariate splits, we acknowledge that these splits can include extra spurious correlations, meaning they may also partially satisfy the FIIF and PIIF assumptions.

\textbf{GOOD-HIV} is a compact, real-world molecular dataset, derived from MoleculeNet~\cite{wu2018moleculenet}. It contains molecular graphs in which atoms are nodes and chemical bonds are edges, and the task is to predict a molecule's potential to inhibit HIV replication. \citet{gui2022good} developed splits based on two domain attributes: the Bemis-Murcko scaffold~\cite{bemis1996properties}, and the number of nodes in a molecular graph. 

\textbf{GOOD-SST2} is a sentiment analysis dataset based on real-world natural language, adapted from the work of ~\citet{yuan2020explainability}. In this dataset, each sentence is converted into a grammar tree graph where individual words serve as nodes, with their associated word embeddings acting as node features. The main task within this dataset is a binary classification exercise aiming to predict the sentiment polarity of each sentence. Sentence lengths are selected as the domains, bringing an added layer of complexity to the classification task.

\textbf{GOOD-Twitter}, similarly, is a real-world natural language sentiment analysis dataset, adapted from ~\citet{yuan2020explainability}. This dataset applies the same transformation process as in SST2, where sentences are converted into grammar tree graphs, with nodes representing words and node features derived from corresponding word embeddings. However, the classification task in this dataset is three-fold, tasked with predicting one of three sentiment polarities for each sentence. As with the GOOD-SST2 dataset, sentence lengths are selected as the domains.

\textbf{GOOD-CMNIST} is a semi-synthetic dataset specifically designed to test node feature shifts. It consists of graphs built from hand-written digits extracted from the MNIST database, utilizing superpixel techniques~\cite{monti2017geometric} for the transformation. Taking a page from the book of \citet{arjovsky2019invariant}, \citet{gui2022good} assigned colors to digits based on their domains and concepts. More specifically, in the covariate shift split, digits are colored using seven distinct colors, with the digits exhibiting the first five colors allocated to the training set, those with the sixth color to the validation set, and the remaining digits with the seventh color allocated to the test set.

\textbf{GOOD-Motif} is a synthetic dataset, inspired by Spurious-Motif~\cite{wu2022dir}, created for structure shift studies. Each graph in the dataset is generated by connecting a base graph and a motif, with the motif solely determining the label. The base graph type and the size are chosen as domain features for creating domain generation splits. Specifically, they generate graphs using five label-irrelevant base graphs (wheel, tree, ladder, star, and path) and three label-determining motifs (house, cycle, and crane).

\textbf{Covariate/FIIF/PIIF Motif (CFP-Motif)} is a synthetic dataset similar to GOOD-Motif and is designed for further structure shift study. The first difference between GOOD-Motif and CFP-Motif is the base subgraphs of the OOD validation and test set. Specifically, instead of using paths as base subgraphs in the test set, we produce Dorogovtsev-Mendes graphs~\cite{dorogovtsev2002evolution} as base subgraphs. While in the validation set, we assign base graphs as circular ladders. Second, CFP-Motif extends GOOD-Motif with FIIF and PIIF shifts. In FIIF and PIIF splits (Fig.~\ref{fig:scms}), $G_C$ and $Y$ have a probability of 0.9 to determine the size of $G_S$, leading to spurious correlations w.r.t. size. That is after determining $G_C$ and $Y$, the types of causal graphs (FIIF) and the labels (PIIF) will be used to choose the size of $G_S$ from [10, 20, 30] with variances. Note that FIIF and PIIF also contain base graph shifts.

\textbf{LBAP-core-ic50}, a dataset derived from DrugOOD~\cite{ji2022drugood}, is applied in the task of Ligand-based affinity prediction (LBAP) with the core noise level and IC50 measurement type as domain features.

The selection of these datasets was deliberate to cover: (1) typical molecular predictions with three diverse domains: scaffold, size, and assay; (2) conventional natural language transformed datasets, SST2 and Twitter, with sentence lengths as domains; (3) synthetic datasets that encompass both structure shifts and feature shifts.

All GOOD datasets and datasets created with the GOOD strategy use the MIT license. DrugOOD dataset uses GPL3.0 license.

We provide the dataset statistics in Tab.~\ref{tab:split}.

\begin{table*}[!t]
\centering
\resizebox{\textwidth}{!}
{
\begin{tabular}{llcccccccccc}
\toprule[2pt]
\multicolumn{2}{l}{Dataset} & \multicolumn{1}{c}{Train} & \multicolumn{1}{c}{ID validation} & \multicolumn{1}{c}{ID test} & \multicolumn{1}{c}{OOD validation} & \multicolumn{1}{c}{OOD test} & \multicolumn{1}{c}{Train} & \multicolumn{1}{c}{OOD validation} & \multicolumn{1}{c}{ID validation} & \multicolumn{1}{c}{ID test} & \multicolumn{1}{c}{OOD test} \\
\midrule[1pt]
& & \multicolumn{5}{c}{Scaffold} & \multicolumn{5}{c}{Size}\\ 
\cmidrule(r){3-7} \cmidrule(r){8-12}
\multicolumn{2}{l}{\multirow{1}{*}{GOOD-HIV}} & 24682 & 4112 & 4112 & 4113 & 4108  & 26169 & 4112 & 4112 & 2773 & 3961 \\
\midrule[1pt]
& & \multicolumn{5}{c}{Length} & \\ 
\cmidrule(r){3-7}
\multicolumn{2}{l}{\multirow{1}{*}{GOOD-SST2}} & 24744 & 5301 & 5301 & 17206 & 17490  &  &  &  &  & \\
\midrule[1pt]
& & \multicolumn{5}{c}{Length} & \\ 
\cmidrule(r){3-7}
\multicolumn{2}{l}{\multirow{1}{*}{GOOD-Twitter}} & 2590 & 554 & 554 & 1785 & 1457  &  &  &  &  & \\
\midrule[1pt]
& & \multicolumn{5}{c}{Color} & \\ 
\cmidrule(r){3-7}
\multicolumn{2}{l}{\multirow{1}{*}{GOOD-CMNIST}} & 42000 & 7000 & 7000 & 7000 & 7000  &  &  &  &  & \\
\midrule[1pt]
& & \multicolumn{5}{c}{Basis/Size} & \\ 
\cmidrule(r){3-7} 
\multicolumn{2}{l}{\multirow{1}{*}{GOOD-Motif}} & 18000 & 3000 & 3000 & 3000 & 3000  &  &  &  &  &  \\
\midrule[1pt]
& & \multicolumn{5}{c}{Covariate/FIIF/PIIF} & \\ 
\cmidrule(r){3-7}
\multicolumn{2}{l}{\multirow{1}{*}{CFP-Motif}} & 1800 & 300 & 300 & 300 & 300  &  &  &  &  &  \\
\midrule[1pt]
& & \multicolumn{5}{c}{Assay} & \\ 
\cmidrule(r){3-7}
\multicolumn{2}{l}{\multirow{1}{*}{LBAP-core-ic50}} & 34179 & 11314 & 11683 & 19028 & 19032  &  &  &  &  &  \\
\bottomrule[2pt]
\end{tabular}
}
\caption{Numbers of graphs in training, ID validation, ID test, OOD validation, and OOD test sets for the 7 datasets.}\label{tab:split}
\label{table:split}
\end{table*}

\subsection{Baseline selection justification and discussion}\label{app:baselines}

This section offers an in-depth justification for our selection of experimental baselines. Traditional Out-Of-Distribution (OOD) baselines tend to perform comparably on graph tasks. As such, we elected to use the most typical ones as our baselines, including IRM~\cite{arjovsky2019invariant}, VREx~\cite{krueger2021out}, Coral~\cite{sun2016deep}, and DANN~\cite{ganin2016domain}. Specifically, DANN serves as an essential baseline to empirically demonstrate that direct domain adversarial training is ineffective for graph tasks. We have chosen not to compare our method with GroupDRO~\cite{sagawa2019distributionally}, as it is empirically similar to VREx. Furthermore, baselines such as EIIL~\cite{creager2021environment}, IB-IRM~\cite{ahuja2021invariance}, and CNC~\cite{zhang2022correct} were deemed non-comparative due to their performances falling below that of the state-of-the-art graph-specific OOD method~\cite{chen2022ciga}.

Graph-specific baselines largely concentrate on learning strategies within the subgraph discovery architecture, except for ASAP~\cite{ranjan2020asap}. ASAP acts as the subgraph discovery baseline, employing intuitive techniques to select subgraphs. DIR~\cite{wu2022dir} is the original graph-specific method for invariant learning, while GSAT~\cite{miao2022interpretable} incorporates the information bottleneck technique~\cite{tishby2000information, tishby2015deep}. GIB~\cite{gao2019graph}, the first graph information bottleneck method, is not included in our experiments due to its performance is lagging behind other baselines, as demonstrated by~\citet{chen2022ciga}. CIGA~\cite{chen2022ciga}, one of the most recent state-of-the-art methods, provides a thorough theoretical discussion about graph OOD generalization. Among these methods, DIR and GSAT perform similarly, with CIGA being the only method that clearly outperforms both ERM and ASAP in real-world scenarios, as seen in Fig.~\ref{tab:real-world datasets}. The performance shortcomings of DIR and CIGA in synthetic datasets can be partially attributed to the selection of the subgraph size ratio. 

In addition to CIGA, recent graph-specific baselines such as DisC~\cite{fan2022debiasing}, MRL~\cite{yang2022learning}, and GIL~\cite{li2022learning} exist. We provide a comparison with GIL in Appx.~\ref{app:environment-comparisons} concerning the environment exploitation phase. As for DisC and MRL, we opt not to include them in our baseline due to specific reasons. DisC is not a learning strategy that can be universally applied to any subgraph discovery network, and its focus on data generation assumptions differs from ours. As for MRL, it is specific to molecule predictions, and a fair comparison with its environment exploration phase is not feasible, given its inability to work within a subgraph discovery architecture. A comparison with the entirety of MRL would not provide a controlled comparison, as it would be unclear whether any improvements stem from the environment inference phase or the environment exploitation phase.



\subsection{Training settings}

Among the variety of Graph Neural Networks (GNNs)~\cite{Liu:NC-GNN, kipf2017semi, xu2018how, velickovic2018graph, liu2021dig, gilmer2017neural, Fey/Lenssen/2019, gao2021topology, corso2020principal}, we elect to use the canonical three-layer Graph Isomorphism Network (GIN)~\cite{xu2018how} with a hidden dimensionality of 300, a dropout ratio of $0.5$, and mean global pooling as the encoder for the GOOD-Motif basis split. For the GOOD-Motif size, CFP-Motif, GOOD-CMNIST, and real-world datasets, we extend GIN with the addition of virtual nodes. A linear classifier is used following the graph encoder. The reason for using virtual nodes for GOOD-Motif size and CFP-Motif is that these splits include size shifts which are hard to capture by mean pooling. Instead of using other pooling methods, we simply apply a virtual node to capture this global information.

Throughout the training phase, we utilize the Adam optimizer~\cite{kingma2014adam}. For GOOD-Motif and GOOD-SST2, the learning rate is set to $10^{-3}$. For CFP-Motif, the learning rate is set to $10^{-4}$ with a weight decay from $[0, 10^{-4}]$. For GOOD-HIV, GOOD-Twitter, and DrugOOD, we apply a learning rate of $10^{-4}$, accompanied by a weight decay of $10^{-4}$. Across all datasets, the standard number of training epochs is set to 200. A training batch size of 32 is used for all datasets, with the exception of GOOD-CMNIST, where we use a batch size of either 64 or 128.

\subsection{Hyperparameter sweeping}\label{app:hyperparameters}

In this section, we display the hyperparameter sweeping space of each method. For the traditional OOD method with only one hyperparameter, we denote the only hyperparameter as $\lambda$. For methods ASAP, DIR, and CIGA, we denote the size ratio of the causal subgraphs compared to original graphs as $\lambda_\text{s}$.

\subsubsection{Hyperparameters of baselines}
\begin{itemize}
    \item \textbf{IRM:} $\lambda=[10^{-1}, 1, 10^{1}, 10^{2}]$.
    \item \textbf{VREx:} $\lambda=[1, 10^{1}, 10^{2}, 10^{3}]$.
    \item \textbf{Coral:} $\lambda=[10^{-3}, 10^{-2}, 10^{-1}, 1]$.
    \item \textbf{DANN:} $\lambda=[10^{-3}, 10^{-2}, 10^{-1}, 1]$.
    \item \textbf{ASAP:} $\lambda_\text{s}=[0.2, 0.4, 0.6, 0.8]$.
    \item \textbf{DIR:} $\lambda_\text{s}=[0.2, 0.4, 0.6, 0.8]$ and $\lambda_\text{var}=[10^{-2}, 10^{-1}, 1, 10^{1}]$, where $\lambda_\text{var}$ is the variance constraint.
    \item \textbf{GSAT:} $r=[0.5, 0.7]$ and $\lambda_\text{decay}=[10, 20]$. Here, $r$ and $\lambda_\text{decay}$ are the information constraint hyperparameter and the information constraint 0.1 decay epochs, respectively.
    \item \textbf{CIGA:} $\lambda_\text{s}=[0.2, 0.4, 0.6, 0.8]$, $\alpha=[0.5, 1, 2, 4]$, and $\beta=[0.5, 1, 2, 4]$. $\alpha$ and $\beta$ are for contrastive loss and hinge loss, respectively.
\end{itemize}

\subsubsection{Hyperparameters of LECI}

In GIN-virtualnode cases, we generally apply $\lambda_\text{L}\in [10^{-2}, 1]$, $\lambda_\text{E}\in [10^{-2}, 10^{-1}]$. In real-world datasets, we adopt $\lambda_\text{PFSC}\in [10^{-2}, 10^{-1}]$ with a fixed information constraint $0.7$ for stable training on GSAT's subgraph discovery basic architecture. Since GOOD-Twitter is too noisy according to the performance, the environment labels on it are far from accurate. Therefore, we extend the search scope as $\lambda_\text{E}\in [10^{-3}, 1]$ into the hyperparameter sweeping on GOOD-Twitter.

In all synthetic datasets, to investigate the pure constraints proposed by LECI, we remove the information constraint. For GOOD-Motif basis and GOOD-CMNIST, we apply strong independence constraints, where the searching range becomes $\lambda_\text{L}\in [1, 10]$, $\lambda_\text{E}\in [10, 100]$, and $\lambda_\text{PFSC}\in [10^{-1}, 1]$. 

The hyperparameter searching scope can be narrowed down using the loss of the two independence components as follows.

\textbf{Hyperparameter searching range selection.} There isn't a set of hyperparameters that works perfectly for every dataset. Fortunately, LECI's valid hyperparameter ranges can be easily determined by drawing the loss curve of each component without any test information leaking or extra dataset assumptions. For example, the hyperparameter $\lambda_\text{E}$ is valid only when the loss $\mathcal{L}_\text{E}$ curve implies a two-stage training pattern. In the first stage, the discriminator should be well-trained, leading to the decrease and convergence of the loss. In the second stage, the adversarial training should take effect; then, the loss will increase. In several simple synthetic datasets, the loss can increase up to the value that indicates a random classification, \eg, this value is $-\log{0.5}$ for binary classification. While in real-world datasets, the loss increase will be more gentle.

\textbf{Invalid training.} When $\mathcal{L}_\text{E}$ or $\mathcal{L}_\text{L}$ remain high during the training process, it directly indicates the learning rate or $\lambda_\text{E}$/$\lambda_\text{L}$ is too high. In this invalid training case, the model can still produce reasonable results, but LECI does not take effect because this case generally indicates the clear violation of Proposition~\ref{proposition:1}. Therefore, different from many other constraints, higher or lower hyperparameters can both lead to invalid training.

\textbf{Golden rule.} \textit{Set the hyperparameter searching space around the golden point where the loss $\mathcal{L}_\text{E}$ and $\mathcal{L}_\text{L}$ fluctuate near their lowest values}, where the lowest values can be measured by setting $\lambda_\text{E}$ and $\lambda_\text{L}$ to 0. For example, if at the golden point, $\lambda_\text{L}=10^{-1}$ and $\lambda_\text{E}=10^{-2}$, then the searching space is generally $\lambda_\text{L} \in [10^{-2}, 10^0], \lambda_\text{L} \in [10^{-3}, 10^{-1}]$ except the invalid training processes occurs when we need to narrow the searching space to eliminate them.

This hyperparameter validation process selects the strong constraints for several synthetic datasets, otherwise the loss at the second stage cannot increase to the value that indicates successful adversarial training.


\subsection{Software and hardware}

Our implementation is under the architecture of PyTorch~\cite{paszke2019pytorch} and PyG~\cite{Fey/Lenssen/2019}. The deployment environments are Ubuntu 20.04 with 48 Intel(R) Xeon(R) Silver 4214R CPU @ 2.40GHz, 755GB Memory, and graphics cards NVIDIA RTX 2080Ti/NVIDIA RTX A6000.

\section{Supplimentary experiment results}

In this section, we provide OOD test set validated sanity check results and interpretability visualizations. For the comparison with GIL~\cite{li2022learning}, we provide it in Appx.~\ref{app:environment-exploitation-quantitative-comparisons}.

\begin{table}[!tb]\centering
\caption{\textbf{Results on the structure and feature shift datasets with the test set hyperparameter selections}}\label{tab:test-leak-results}
\resizebox{\textwidth}{!}{\begin{tabular}{lccccccc}\toprule
&\multicolumn{2}{c}{GOOD-Motif} &GOOD-CMNIST &\multicolumn{3}{c}{CFP-Motif} \\\cmidrule{2-7}
&basis &size &color &covariate &FIIF &PIIF \\\midrule
ERM &60.93(11.11) &56.63(7.12) &26.64(2.37) &59.33(5.73) &43.00(7.67) &62.45(9.21) \\
IRM &64.94(4.85) &54.52(3.27) &29.63(2.06) &58.33(5.44) &68.89(4.99) &\underline{72.89(4.88)} \\
VREx &64.10(8.10) &57.55(3.99) &31.94(1.95) &62.78(2.47) &44.67(9.73) &71.00(2.05) \\
Coral &69.47(1.42) &\underline{60.60(1.40)} &30.79(2.50) &58.44(9.37) &51.67(6.60) &70.44(3.13) \\
DANN &\underline{72.07(1.47)} &59.01(1.55) &29.36(1.23) &60.22(3.19) &52.00(3.14) &70.00(4.53) \\
\midrule
ASAP &45.00(11.66) &42.23(4.20) &26.45(2.06) &64.67(5.63) &44.34(1.70) &41.89(5.28) \\
DIR &58.49(18.97) &47.70(10.77) &26.73(3.25) &67.33(4.38) &62.66(9.67) &57.67(9.27) \\
GSAT &62.27(8.79) &57.99(0.91) &\underline{35.02(1.78)} &\underline{71.67(7.09)} &55.33(9.69) &61.22(8.80) \\
CIGA &67.16(21.59) &57.10(4.07) &27.63(3.19) &60.45(6.38) &\underline{72.33(6.62)} &57.44(9.12) \\
\midrule
LECI &\textbf{86.97(4.48)} &\textbf{77.57(4.91)} &\textbf{66.75(6.18)} &\textbf{88.78(0.69)} &\textbf{81.56(2.57)} &\textbf{76.67(0.98)} \\
\bottomrule
\end{tabular}}
\end{table}

\subsection{A second look at the ability to address structure and feature shifts.}\label{app:test-leak-results}

To further exploit the full power of each OOD learning method, we collect the results on the structure and feature sanity check datasets by using \textbf{OOD test results to select hyperparameters}. As shown in Tab.~\ref{tab:test-leak-results}, with the leak of OOD test information, several methods improve their performance significantly, yet LECI still markedly outperforms all baselines. 

\subsection{Interpretability visualization results}\label{app:interpretability-visualization}

\begin{figure}
    \centering
    \scalebox{0.3}{\includegraphics{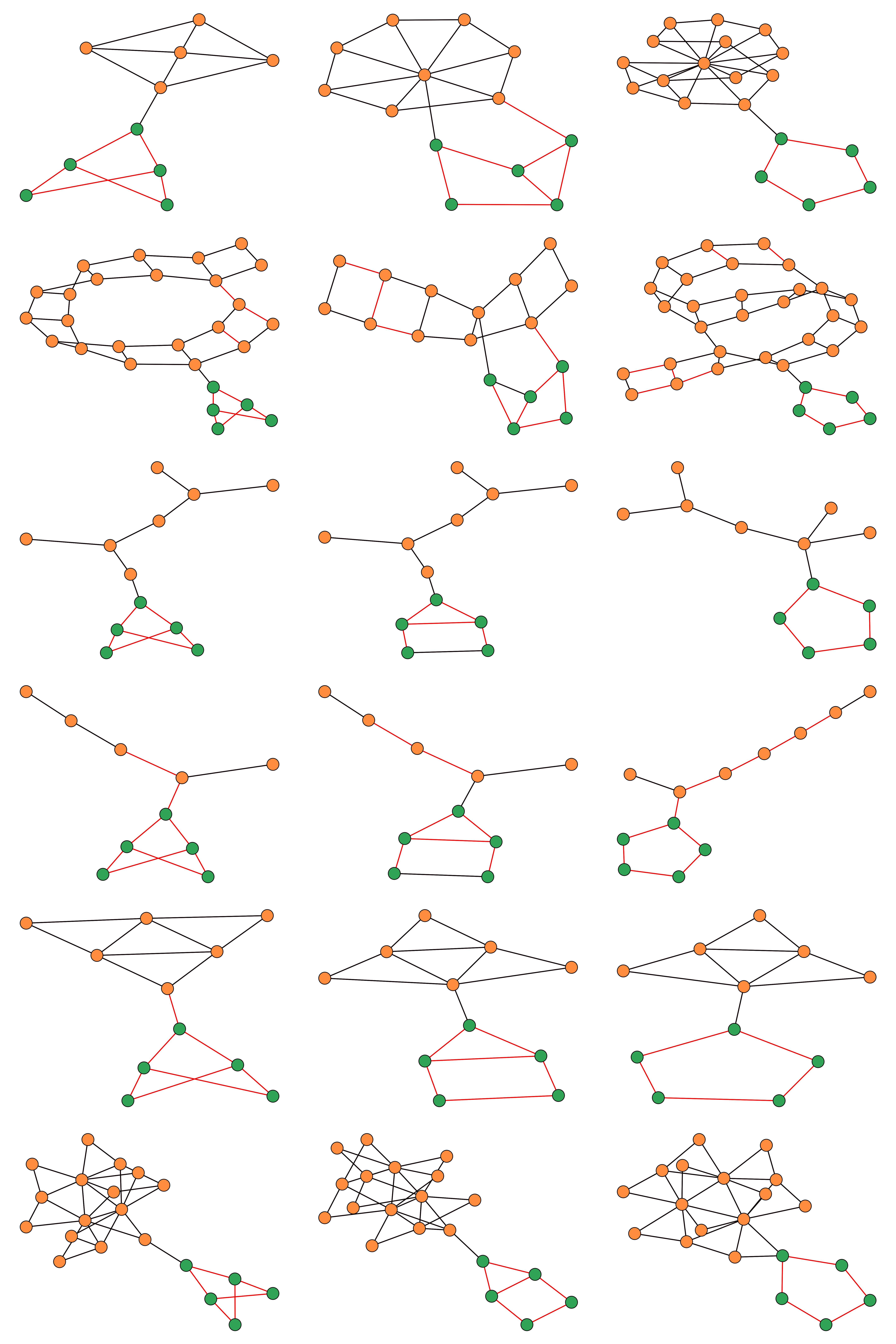}}
    \caption{\textbf{Interpretability results on GOOD-Motif and CFP-Motif covariate shifts.} Base subgraphs and motifs are denoted as orange and green nodes, respectively. The selected subgraphs are indicated by red edges.}
    \label{fig:more-interpretability-visualization}
\end{figure}

We provide interpretability~\cite{yuan2020explainability, ying2019gnnexplainer} visualization results on GOOD-Motif and CFP-Motif. As illustrated in Fig.~\ref{fig:more-interpretability-visualization}, we use thresholds to select the top probability edges as the interpretability subgraph selections. Specifically, on each row, we apply different base subgraphs, including wheel, ladder, tree, path, and small/large Dorogovtsev-Mendes graphs. On each column, we attach the base subgraphs with different motifs, namely, crane, house, and cycle. According to the interpretability results, LECI can select most motifs accurately, which implies its good performance on the causal subgraph selection and further explains the reason behind the good performance. As shown in the figure, the spurious parts of ladders and paths are more challenging to eliminate than other base subgraphs. Fortunately, the latter invariant predictor can distinguish them and make consistent predictions.

\end{document}


\maketitle

\section{Method supplementary}

\subsection{The architecture of subgraph selector}\label{app:subgraph-selector}

\begin{figure}[!t]
    \centering
    \scalebox{0.55}{\includegraphics{Figures/subgraph selector.png}}
    \caption{\textbf{The architecture of subgraph selector $f_\theta$.}}
    \label{fig:subgraph selector}
\end{figure}

The architecture of our subgraph selector, as illustrated in Fig.~\ref{fig:subgraph selector}, is based on the work of~\citet{miao2022interpretable} and uses edges to select subgraphs. The input graph $G$ is passed through a general GNN to obtain the hidden representations of its nodes. These representations are then transformed into edge embeddings, where each edge embedding is the concatenation of the corresponding node representations. To obtain the edge probability mask, we use a multi-layer perceptron (MLP) and a sigmoid function to estimate the sampling probability of each edge. With these sampling probabilities, we obtain the sampled adjacency matrix $\hat{A}_C$, where we apply the Gumbel-sigmoid~\cite{jang2016categorical} technique to make the network differentiable. Finally, the subgraph selector outputs the selected subgraph $\hat{G}_C$ composed of $\hat{A}_C$ and the corresponding remaining nodes.

\subsection{The architecture of the pure feature shift consideration}\label{app:PFSC}

\begin{figure}[!t]
    \centering
    \scalebox{0.6}{\includegraphics{Figures/PFSC.png}}
    \caption{\textbf{The architecture of the pure feature shift consideration (PFSC).} The figure illustrates the overall network of the PFSC. The reverse gradient multiplication hyperparameter is denoted as $\lambda_\text{PFSC}$. The loss $\mathcal{L}_\text{PFSC}$ here is the negative log-likelihood in Eq.~\ref{eq:PFSC}.}
    \label{fig:PFSC}
\end{figure}


The technique of pure feature shift consideration (PFSC) is used to remove environment information from the node feature $X$. As illustrated in Fig.~\ref{fig:PFSC}, a small MLP transformer, denoted as $\text{MLP}_\text{T}$, with learnable parameters $\phi_\text{T}$ is used to transform the node features $X$ into environment-free node features $X'$. This transformation is achieved through an adversarial training process, where the MLP is trained to remove environment information from $X$, and the feature environment discriminator aims to approximate the correlation between $X'$ and the environment $E$. The objective of this process can be written as in Eq.~\ref{eq:PFSC}. After this transformation, the input graph $G$ is composed of $A$ and $X'$, \ie, $G=(X', A)$. It is important to note that the use of PFSC alone may not be sufficient for achieving optimal performance, as shown in the results of the ablation study in Tab.~\ref{tab:ablation study}.

\begin{equation}\label{eq:PFSC}
    \text{PFSC}: \max_{\phi_\text{T}} \min_{\phi_\text{FE}} - \mathbb{E}\left[\log{P_{\phi_\text{FE}}}(E|X')\right],
\end{equation}

\subsection{LECI's training strategy}\label{leci-training-strategy}


The LECI training strategy involves two different implementations. The first implementation follows a strict training process described in Sec.~\ref{sec:adversarial-implementation}, where the subgraph selector is updated only when the two discriminators converge. The subgraph selector and the two discriminators then train iteratively in this way throughout the training procedure. However, this implementation can be slow. To address this, the second implementation is used. In this implementation, the entire network, including the subgraph selector and the discriminators, is optimized. The adversarial training hyperparameters, $\lambda_\text{L}$ and $\lambda_\text{E}$, are initially set to zero, which prevents the gradients from the discriminators from affecting the subgraph selector. This helps to maintain a relatively stable distribution of $\hat{G}_C$, allowing the discriminators to train consistently. As the network parameters become stable over time, and the discriminators approach convergence, the adversarial training hyperparameters are gradually increased. This allows the independence constraints to take effect slowly, while ensuring that the discriminators are close to their optimal states without sudden corruption, as shown in Fig.~\ref{fig:stability study}.

\section{Experiment settings}

LECI is built on the architecture of the GOOD benchmark~\cite{gui2022good}. Upon publication, we will make our code and results available on the benchmark.

\subsection{Fairness discussion}

In all experiments in this paper, we do not cherry-pick results. All reported results are obtained from at least 3 runs and are selected by automatic hyperparameter sweeping with respect to validation results.

\subsection{Datasets}

In the experiment part, we adopt SST2, Motif, and CMNIST from GOOD benchmark~\cite{gui2022good}; the LBAP-core-ic50 assay split from DrugOOD~\cite{ji2022drugood}.
Additionally, the GOOD-Twitter and GOOD-Motif2 are created by us following the split strategy of GOOD. All dataset splits are domain generalization splits following the original literature. Even though GOOD datasets are designed for covariate splits, \citet{gui2022good} also acknowledges that these splits can incorporate extra spurious correlations, \ie, they may satisfy parts of the FIIF and PIIF assumptions. 

All GOOD datasets and datasets created with the GOOD strategy use the MIT license. DrugOOD dataset uses GPL3.0 license.

\subsection{Baseline selection justification \& discussion}\label{app:baselines}

We provide a detailed justification of our experiment baseline selection. For traditional OOD baselines, their performances are relatively similar on graph tasks. Therefore, we choose the most typical ones as our baselines, including IRM~\cite{arjovsky2019invariant}, VREx~\cite{krueger2021out}, Coral~\cite{sun2016deep}, and DANN~\cite{ganin2016domain}. DANN is an important baseline to empirically prove that direct domain adversarial training cannot work on graph tasks. We do not compare with GroupDRO~\cite{sagawa2019distributionally} because it is empirically similar to VREx. For other baselines like EIIL~\cite{creager2021environment}, IB-IRM~\cite{ahuja2021invariance}, and CNC~\cite{zhang2022correct}, their performances are lower than the state-of-the-art graph-specific OOD method~\cite{chen2022ciga}, thus they are of uninteresting to compare with.

For the graph-specific baselines, most of them focus on learning strategies under the subgraph discovery architecture except ASAP~\cite{ranjan2020asap}. ASAP acts as the subgraph discovery baseline, using intuitive techniques to select subgraphs. DIR~\cite{wu2022dir} is the original graph-specific method for invariant learning. GSAT~\cite{miao2022interpretable} plays as the method that incorporates the information bottleneck technique~\cite{tishby2000information, tishby2015deep}. GIB~\cite{gao2019graph} is the first graph information bottleneck method, but its performance is significantly worse than other baselines, as shown by~\citet{chen2022ciga}, so we do not include it in our experiments. CIGA~\cite{chen2022ciga}, as the most recent state-of-the-art method, provides the most detailed theoretical discussion about the graph OOD generalization. Among all these methods, the performances of DIR and GSAT are similar, and CIGA is the only method that clearly performs better than both ERM and ASAP in real-world scenarios, as shown in Fig.~\ref{tab:real-world datasets}. The performance failures of DIR and CIGA in synthetic datasets are partly because the selection of the subgraph size ratio, where the validation results may not be enough to select the best size for every dataset. To further investigate the training strategies' effects, we additionally compare these synthetic datasets where hyperparameters are selected by the OOD test results directly in Appx.~\ref{app:test-leak-results}.

Except for CIGA, there are several very recent graph-specific baselines DisC~\cite{fan2022debiasing}, MRL~\cite{yang2022learning}, and GIL~\cite{li2022learning}. We now justify the reason why they are not on the list of our baselines. For DisC, it is not a learning strategy that can work for any subgraph discovery network. Furthermore, its focused data generation assumption differs from ours and our baseline's. For MRL, it shares the same situations as DisC and only works on molecule predictions. GIL's main contribution is its graph environment inference, which is parallel to our research that exploits environment information. It may be interesting to combine GIL with our environment exploit strategy, but they have yet to post the code of GIL by the time of this submission. Last but not least, it is not reasonable and fair for us to compare all these very recent four methods.



\subsection{Training settings}

Among different GNNs~\cite{Liu:NC-GNN, kipf2017semi, xu2018how, velickovic2018graph, liu2021dig, gilmer2017neural, Fey/Lenssen/2019, gao2021topology, corso2020principal}, we use the typical three-layer GIN~\cite{xu2018how} with 300 hidden dimensions and $0.5$ dropout ratio for GOOD-Motif and GOOD-Motif2 datasets, while in GOOD-CMNIST and real-world datasets, we apply GIN with virtual nodes with the mean global pooling. On all synthetic datasets and GOOD-SST2, the learning rate is set to be $1e-3$, while for GOOD-HIV, GOOD-Twitter, and DrugOOD, the learning rate is set as $1e-4$ with $1e-4$ weight decay. For all datasets, the standard number of training epochs is 200. Since we do not apply early stop, for the methods that need to stop training earlier, like DIR, the training epoch is set to 100 on synthetic datasets. All datasets use 32 as the training batch size, except GOOD-CMNIST, where a batch size of 64/128 is used. Experiments are conducted on 10 NVIDIA GeForce RTX 2080Ti.

\subsection{Hyperparameter sweeping}\label{app:hyperparameters}

In this section, we display the hyperparameter sweeping space of each method. For the traditional OOD method with only one hyperparameter, we denote the hyperparameter as $\lambda$. Additionally, we denote the size ratio of the causal subgraphs compared to original graphs as $\lambda_\text{s}$.

\subsubsection{Hyperparameters of baselines}
\begin{itemize}
    \item \textbf{IRM:} $\lambda=[1e-1, 1, 1e1, 1e2]$.
    \item \textbf{VREx:} $\lambda=[1, 1e1, 1e2, 1e3]$.
    \item \textbf{Coral:} $\lambda=[1e-3, 1e-2, 1e-1, 1]$.
    \item \textbf{DANN:} $\lambda=[1e-3, 1e-2, 1e-1, 1]$.
    \item \textbf{ASAP:} $\lambda_\text{s}=[0.2, 0.4, 0.6, 0.8]$.
    \item \textbf{DIR:} $\lambda_\text{s}=[0.2, 0.4, 0.6, 0.8]$ and $\lambda_\text{var}=[1e-2, 1e-1, 1, 1e1]$, where $\lambda_\text{var}$ is the variance constraint.
    \item \textbf{GSAT:} $r=[0.5, 0.7]$ and $\lambda_\text{decay}=[10, 20]$. Here, $r$ and $\lambda_\text{decay}$ are the information constraint hyperparameter and the information constraint 0.1 decay epochs, respectively.
    \item \textbf{CIGA:} $\lambda_\text{s}=[0.2, 0.4, 0.6, 0.8]$, $\alpha=[0.5, 1, 2, 4]$, and $\beta=[0.5, 1, 2, 4]$. $\alpha$ and $\beta$ are for contrastive loss and hinge loss, respectively.
\end{itemize}

\subsubsection{Hyperparameters of LECI}

In general real-world cases, we apply $\lambda_\text{L}=[1e-2, 1e-1, 1]$, $\lambda_\text{E}=[1e-2, 1e-1]$, and $\lambda_\text{PFSC}=[1e-2, 1e-1]$ with a fixed information constraint $0.7$ for stable training on GSAT's subgraph discovery basic architecture. Since GOOD-Twitter is too noisy according to the performance, the environment labels on it are far away from accurate. Therefore, we add a smaller value $\lambda_\text{E}=1e-3$ into the hyperparameter sweeping on GOOD-Twitter.

In synthetic datasets, to investigate the pure constraints proposed by LECI, we remove the information constraint. Since the spurious information in synthetic datasets is easier to learn, we apply strong independence constraints, where the searching range becomes $\lambda_\text{L}=[1, 5, 10]$, $\lambda_\text{E}=[10, 20]$, and $\lambda_\text{PFSC}=[1e-1, 1]$. The following process determines these hyperparameter searching spaces.

\textbf{Hyperparameter searching range selection.} There isn't a set of hyperparameters that works perfectly for every dataset. Fortunately, LECI's valid hyperparameter ranges can be easily determined by drawing the loss curve of each component without any test information leaking or extra dataset assumptions. For example, the hyperparameter $\lambda_{E}$ is valid only when the loss $\mathcal{L}_\text{E}$ curve implies a clear two-stage training pattern. In the first stage, the discriminator should be well-trained, leading to the decrease and convergence of the loss. At the second stage, the adversarial training should take effect; then, the loss will increase to the value that indicates a random classification, \eg, this value is $-\log{0.5}$ for a binary classification. Specifically, this hyperparameter validation process selects the strong constraints for synthetic datasets. This is because, without strong constraints, the loss at the second stage cannot increase to the value that indicates successful adversarial training.

This hyperparameter validation process is the consequence of the second implementation demonstrated in Appendix.~\ref{leci-training-strategy}, \ie, we have to ensure the well-trained states of the discriminators. The closer this training process is to the first implementation, the closer the LECI is to the ideal training process. This will lead to better results, but at a slower training speed. Therefore, it is a trade-off between effectiveness and speed.

\section{Supplimentary experiment results}

\begin{table}[!t]\centering
\caption{\textbf{Results on structure and feature shift datasets with the leak of test information.}}\label{tab:test-leak-results}
\begin{tabular}{lrrrrr}\toprule
&Motif-basis &Motif-size &Motif2-base &CMNIST \\\midrule
ERM &60.93(11.11) &51.05(2.80) &48.38(0.16) &26.64(2.37) \\
IRM &64.94(4.85) &55.27(1.10) &54.11(4.56) &29.63(2.06) \\
VREx &64.10(8.10) &55.11(3.28) &52.98(2.39) &31.94(1.95) \\
Coral &69.47(1.42) &54.09(0.73) &50.10(3.25) &30.79(2.50) \\
DANN &\underline{72.07(1.47)} &50.46(1.83) &58.96(2.67) &29.36(1.23) \\
ASAP &45.00(11.66) &47.52(7.33) &66.57(15.54) &26.45(2.06) \\
DIR &58.49(18.97) &46.02(0.64) &65.69(22.23) &26.73(3.25) \\
GSAT &62.27(8.79) &\underline{63.30(1.30)} &\underline{71.76(8.38)} &\underline{35.02(1.78)} \\
CIGA &67.16(21.59) &61.51(10.48) &50.63(9.98) &27.63(3.19) \\
LECI &\textbf{91.66(0.65)} &\textbf{69.33(1.31)} &\textbf{87.72(3.94)} &\textbf{67.24(3.20)} \\
\bottomrule
\end{tabular}
\end{table}

\subsection{A second look at the ability of addressing structure and feature shifts.}\label{app:test-leak-results}

To further exploit the full power of each OOD learning method, we collect the results on the structure and feature sanity check datasets by using OOD test results to select hyperparameters. As shown in Tab.~\ref{tab:test-leak-results}, with the leak of OOD test information, several methods improve their performance significantly, but LECI still outperforms other methods with a large gap.

\subsection{Interpretability visualization results}\label{app:more-interpretability-visualization}

\begin{figure}
    \centering
    \scalebox{0.3}{\includegraphics{Figures/more_interpret_visualization.png}}
    \caption{\textbf{Interpretability results on GOOD-Motif and GOOD-Motif2.} Base subgraphs and motifs are denoted as orange and green nodes, respectively. The selected subgraphs are indicated by red edges.}
    \label{fig:more-interpretability-visualization}
\end{figure}

We provide more interpretability~\cite{yuan2020explainability, ying2019gnnexplainer} visualization results on GOOD-Motif and GOOD-Motif2. As illustrated in Fig.~\ref{fig:more-interpretability-visualization}, we use thresholds to select the top probability edges as the interpretability subgraph selections. Specifically, on each row, we apply different base subgraphs, including wheel, ladder, tree, path, and small/large Dorogovtsev-Mendes graphs. On each column, we attach the base subgraphs with different motifs, namely, crane, house, and cycle. According to the interpretability results, LECI can select most motifs accurately, which implies its good performance on the causal subgraph selection and further explains the reason behind the good performances. As shown in the figure, the spurious parts of ladders and paths are more challenging to be eliminated than other base subgraphs. Fortunately, the latter invariant predictor can distinguish them and make consistent predictions.

\section{Supplementary discussions}\label{app:discussions}

\subsection{Second precision challenge}

It is worth noting that the second precision challenge is a blind spot for many OOD generation works based on the subgraph discovery networks. This is because they assume the invariant predictor $g_\text{inv}$ in Fig.~\ref{fig:architecture} is able to solve this challenge by maximizing the mutual information between the selected subgraph and the label $I(Y;\hat{G}_C)$. This assumption, however, does not hold empirically according to the ablation study in Sec.~\ref{sec:ablation-study}, where the subgraph discovery network with only the solution of the first precision problem (EA) fails to generalize. Intuitively, the subgraph selector $f_\theta$ is differentiable, but due to its discrete sampling nature, there are many undefined continual states, \eg, the middle states of the existence of an edge. These undefined states will hinder the effectiveness of gradients propagated to the subgraph selector. As a result, the invariant predictor can hardly implicitly access the label-correlated information from the unselected part $\hat{G}_S$ through gradients. This implies that training the invariant predictor alone is not sufficient to address the second precision challenge without additional support.

\clearpage
\bibliography{LECI}
\bibliographystyle{plainnat}
